\newcommand*{\Title}{From gymnastics to virtual nonholonomic constraints: energy
injection, dissipation, and regulation for the acrobot}
\newtheorem{thm}{Theorem}
\newtheorem{prop}{Proposition} 
\newtheorem{lemma}{Lemma} 
\newtheorem{defn}{Definition} 
\newtheorem{assm}{Assumption} 
\newtheorem{rem}{Remark}
\DeclarePairedDelimiter{\abs}{\lvert}{\rvert}
\DeclareMathOperator{\Rank}{rank}
\DeclareMathOperator{\Sign}{sgn}
\DeclareMathOperator{\Diag}{diag}
\newcommand*{\rank}[1]{\Rank\left(#1\right)}
\newcommand*{\sign}[1]{\Sign\left(#1\right)}
\newcommand*{\diag}[1]{\Diag\left(#1\right)}
\newcommand*{\tpose}{^\mathsf{T}} 
\newcommand*{\inv}{^\mathsf{-1}}
\newcommand*{\Rt}[1]{[\R]_{#1}}
\newcommand*{\R}{\mathbb{R}}
\renewcommand*{\Re}{\mathbb{R}}
\newcommand*{\Sone}{\mathbb{S}^1}
\newcommand*{\SxR}{\Sone \times \R}
\newcommand*{\Minv}{M^\mathsf{-1}}
\newcommand*{\Id}[1]{I_{#1}}
\newcommand*{\Zmat}[1]{\bm{0}_{#1}}
\newcommand*{\pdiff}[2]{\frac{\partial #1}{\partial #2}}
\newcommand*{\simpleB}{\begin{bmatrix}\Zmat{(n-k)\times k}\\ \Id{k}\end{bmatrix}}
\newcommand*{\etal}{\MakeLowercase{\textit{et al.~}}}
\newcommand*{\cU}{\mathcal{U}}
\newcommand*{\cV}{\mathcal{V}}
\newcommand*{\cP}{\mathcal{P}}
\newcommand*{\cQ}{\mathcal{Q}}
\newcommand*{\rone}{r}
\newcommand*{\eqdef}{\vcentcolon=}
\newcommand*{\vnhc}{\textsc{vnhc}\xspace}
\newcommand*{\vnhcs}{\textsc{vnhc}s\xspace}
\begin{document}
\title{\Title}
\author{Adan Moran-MacDonald, \IEEEmembership{Member, IEEE}, Manfredi Maggiore,
\IEEEmembership{Senior Member, IEEE}, and Xingbo Wang
\thanks{A. Moran-MacDonald (e-mail: adan.moran@mail.utoronto.ca),
M. Maggiore (e-mail: maggiore@control.utoronto.ca), and
X. Wang (e-mail: xingbo.wang@utoronto.ca) are with the Department of
Electrical and Computer Engineering, University of Toronto, ON, Canada. This research was supported by the Natural Sciences and Engineering Research Council of Canada (NSERC).}
\thanks{This paper was published in the IEEE Transactions on Control Systems Technology, vol. 32, issue 1, pp. 47-60, 2024. DOI: 10.1109/TCST.2023.3294065}
} 

\maketitle

\begin{abstract}
    In this article we study virtual nonholonomic constraints, which are
    relations between the generalized coordinates and momenta of a
    mechanical system that can be enforced via feedback control.
    We design a constraint which emulates gymnastics giant motion in an
    acrobot, and prove that this constraint can inject or dissipate
    energy based on the sign of a design parameter.
    The proposed constraint is tested both in simulation and experimentally on a real-world acrobot,
    demonstrating highly effective energy regulation properties and robustness
    to a variety of disturbances.
\end{abstract}

\begin{IEEEkeywords}
    energy regulation, virtual nonholonomic constraints, acrobot, gymnastics.
\end{IEEEkeywords}

\section{Introduction}\label{sec:introduction}

In gymnastics terminology, a ``giant" is the motion a gymnast performs to
achieve full rotations around a horizontal bar \cite{usagym_giant}. 
A gymnast will begin by hanging at rest, then swing their legs
appropriately to gain energy over time.
The authors of \cite{pendulum_length_giant_gymnastics} modelled the gymnast as a
variable length pendulum, and studied how the pendulum's length changes as a
function of the gymnast's limb angle.
Labelling the pendulum length by \(r\) and the gymnast's body orientation
by \(\theta\), they observed experimentally that the value \(\dot{r}/r\) has
the biggest impact on the magnitude of energy injection. 
After testing several gymnasts under a variety of experimental conditions, 
they discovered that the peak value of \(\dot{r}/r\) occurred at the same fixed
value of \(\dot{\theta}/\theta\) for all gymnasts.
In other words, gymnasts appear to move their legs as a function of their body
angle and velocity when performing giants; 
doing so allows them to gain energy and rotate around the bar.

While the simplest model of a gymnast is the variable-length pendulum, a
more realistic model is the two-link acrobot (Figure \ref{fig:acrobot}).
Here, the top link represents the torso while the bottom link represents
the legs. 
The acrobot is actuated exclusively at the centre joint (the hip).
To solve the swingup problem, one might begin by designing a leg controller
which provably injects energy into the acrobot, so that the resulting motion
mimics that of a human performing a giant.

\begin{figure}
    \centering
    \begin{subfigure}[t]{0.40\linewidth}
        \includegraphics[width=\linewidth]{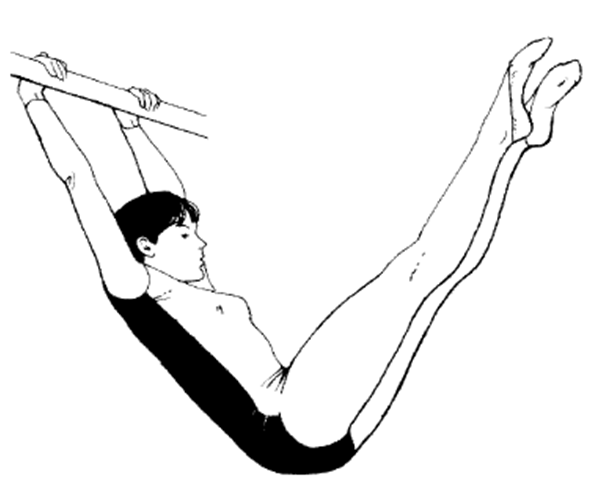}
    \end{subfigure}
    \hfill
    \begin{subfigure}[t]{0.48\linewidth}
        \includegraphics[width=\linewidth]{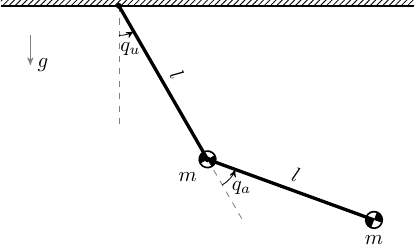}
    \end{subfigure}
    \caption{A simplified two-link acrobot as a model for a gymnast.
    Image modified from \cite{xingbo_thesis}.}
    \label{fig:acrobot}
\end{figure}

Previous attempts at acrobot giant generation have involved
trajectory tracking, partial feedback linearization, or other energy-based
methods
(see
\cite{energy_pumping_robotic_swinging,swingup_giant_acrobot,dynamical_servo_acrobot_vc,control_giant_two_link_gymnastic_robot}
).
While all these approaches succeed at making the acrobot rotate around the
bar, none of them use the results of \cite{pendulum_length_giant_gymnastics}.
That is, none of these leg controllers track a function of the acrobot's body
angle and velocity.
In 2016, Wang designed a controller which tracked the
body angle and an \textit{estimate} of the velocity, but not the velocity itself
\cite{xingbo_thesis}.
His approach was a preliminary version of a recent technique know as the method
of virtual nonholonomic constraints.

Virtual nonholonomic constraints (\vnhcs) have been used for human-robot interaction
\cite{vnhc_human_robot_cooperation,psd_based_vnhc_redundant_manipulator,haptic_vnhc},
error-reduction on time-delayed systems \cite{vnhc_time_delay_teleop},
and they have shown marked improvements to the field of bipedal locomotion 
\cite{nhvc_dynamic_walking,
hybrid_zero_dynamics_bipedal_nhvcs,output_nhvc_bipedal_control}.
Indeed, they produce more robust walking motion in biped robots than
other virtual constraints which do not depend on velocity
\cite{nhvc_incline_walking}.
In particular, \vnhcs may be capable of injecting and
dissipating energy from a system in a robust manner, all while producing
realistic biological motion. The recent paper~\cite{SimStraBloCol23} initiates a systematic geometric investigation of \vnhcs that are affine functions of the configuration velocities.

\textbf{Contributions of this article:}
In this article, we design a \vnhc which provably
injects or dissipates energy into the acrobot depending on the sign of a design parameter. The proposed constraint produces a human-like giant motion.
The main result of the paper is presented in Theorems~\ref{thm:acrobot-oscillations} and~\ref{thm:acrobot-rotations}.   Theorem~\ref{thm:acrobot-oscillations} shows that once the virtual constraint has been enforced, for any nonzero ``low-energy'' initial condition, the proposed controller injects energy in the acrobot making it exit, in finite time, any compact subset of a certain region of the state space where the acrobot oscillates about its pivot point without performing full rotations. Theorem~\ref{thm:acrobot-rotations} handles the case when the acrobot is initialized in the complement of the above region, and it shows that under a mild assumption on the physical parameters, the acrobot  is guaranteed to keep increasing its energy while performing full rotations about the pivot point. By changing the sign of the design parameter, the results of Theorems~\ref{thm:acrobot-oscillations} and~\ref{thm:acrobot-rotations} hold for energy dissipation. By combining these energy injection/damping properties, we then propose an elementary switching logic to attain approximate energy regulation. The results of this paper are demonstrated both in simulation and experimentally. The experimental results, in particular, demonstrate the robustness of the controller against model uncertainty, sensor noise, and a variety of external
disturbances. 

\textbf{Related literature:} While the focus of this paper is on energy injection and dissipation, a closely related concept is that of orbital stabilization to a particular energy level set. The  problem of orbital stabilization has been investigated extensively and we mention four main approaches.

  One approach, applicable chiefly to underactuated mechanical systems with degree of underactuation one, relies on virtual holonomic constraints to create a target closed orbit and some mechanism to stabilize this orbit. In~\cite{ShiPerWit05}, the mechanical system is linearized around the orbit and a linear time-varying controller is designed to stabilize it. In~\cite{dynamic_vhcs_stabilize_closed_orbits}, the constraint is embedded in a one-parameter family, and a dynamic controller is designed to adapt the parameter so as to stabilize the target orbit.

  A second approach to orbital stabilization, developed in~\cite{OrtYiRomAst20} and applicable to general control-affine systems, is an adaptation of the immersion and invariance (I\&I) technique of~\cite{AstOrt03}. In this approach, one seeks a controlled invariant submanifold of the state space on which the system has desired oscillator dynamics, and designs a feedback controller to stabilize this submanifold. The result is a controller that stabilizes \emph{some} closed orbit on the manifold, depending on the initial condition. This is in contrast to the approaches in~\cite{ShiPerWit05,dynamic_vhcs_stabilize_closed_orbits} that are able to fix the closed orbit to be stabilized.

  A third approach, developed in~\cite{YiOrtWuZha20}, is an adaptation of the passivity-based interconnection and damping assignment (IDA) technique of~\cite{OrtSpoGomBlan02} and is applicable to underactuated port-Hamiltonian systems. In~\cite{YiOrtWuZha20}, the authors assume that the target closed orbit is contained in a two-dimensional plane and present two ways of stabilizing it. The first way relies on a modification of energy shaping via IDA in which the component of the shaped energy on the plane has a minimum at the target closed orbit so that its graph resembles a Mexican sombrero.  The second way represents the closed orbit as an energy level set and assigns the damping matrix in such a way that the controller pumps  energy when the energy is below the target level and damps energy otherwise.

  A final approach, developed in~\cite{FanLoz02}, stabilizes homoclinic orbits of pendulum-like systems using energy methods and Lyapunov-based passivity techniques. The stabilization of a homoclinic orbit allows the designer to inject energy in the mechanical system, and it typically yields large domains of attraction.
  
 We now place the contributions of this paper in the context of the literature just reviewed. It is important to bear in mind that the ideas put forth here are limited in scope to the acrobot. They are nonetheless notable in that they explore a mechanism to pump or dissipate energy  that is guaranteed to work in a large region of the state space. Theorems~\ref{thm:acrobot-oscillations} and~\ref{thm:acrobot-rotations} make this precise. In contrast, many the approaches found in the literature and reviewed  above yield only local asymptotic stabilization of a closed orbit. When it comes to the acrobot example, the employment of \vnhcs allows one to incorporate knowledge about giant maneuvers in gymnastic into the control design, something that cannot be easily achieved with existing techniques.

 There are similarities between the virtual constraint approach and the I\&I technique for orbital stabilization  proposed in~\cite{OrtYiRomAst20} in that both approaches seek a controlled invariant submanifold on which the closed-loop system exhibits desired dynamics. But while in~\cite{OrtYiRomAst20} one wishes to obtain oscillator dynamics on the submanifold, in this paper we wish to obtain expansive or contractive dynamics (depending on the sign of a design parameter) which rules out oscillator dynamics.

In this paper, like in~\cite{YiOrtWuZha20}, we design a controller for energy pumping/damping, but the energy pumping/damping mechanism proposed here does not arise from any form of energy shaping. In particular, it does not arise from an assignment of the damping matrix of the closed-loop system.

Finally, while in~\cite{FanLoz02} the emphasis is typically the stabilization of a specific energy level set corresponding to the homoclinic orbit of an unstable equilibrium, the technique used in this paper to inject energy does not involve the stabilization of any homoclinic orbit, and as such it does not limit the energy level that can be attained.

\textbf{Paper organization:}
In Section~\ref{sec:problem-formulation} we review the acrobot model and formulate the problem investigated in this paper. Section~\ref{sec:vnhc} presents a class of simply actuated Hamiltonian systems and preliminary notions on \vnhcs. In particular, Theorem~\ref{thm:vnhc-regularity} in this section characterizes the constrained dynamics arising from a class of \vnhcs. This result is then compared with existing literature on the subject. Section~\ref{sec:acrobot} presents the proposed \vnhc for the acrobot and the two main results of the paper, Theorems~\ref{thm:acrobot-oscillations} and~\ref{thm:acrobot-rotations} mentioned earlier. Sections~\ref{sec:simulations} and~\ref{sec:experiments} present simulations and experimental results. Section~\ref{sec:proof} presents the proof of Theorems~\ref{thm:acrobot-oscillations} and~\ref{thm:acrobot-rotations}, and Section~\ref{sec:conclusion} presents concluding remarks.

\textbf{Notation}:
We use the following notation and terminology in this article.
The \(n \times n\) identity matrix is denoted \(\Id{n}\), and the \(n \times m\)
matrix of zeros is denoted \(\Zmat{n\times m}\).
A matrix \(A \in \R^{n \times m}\) is \textit{right semi-orthogonal} if
\(A A\tpose = \Id{n}\) and is \textit{left semi-orthogonal} if 
\(A\tpose A = \Id{m}\).
For \(A \in \R^{n\times m}\) and \(B \in \R^{p \times m}\),
we define \([A;B] \in \R^{(n+p)\times m}\) as the matrix obtained by stacking \(A\)
on top of \(B\). 
Given \(\sigma_1,\ldots,\sigma_n \in \R\), we define 
\(\diag{\sigma_1,\ldots,\sigma_n} \in \R^{n \times n}\) as the diagonal matrix
whose value at row \(i\), column \(i\) is \(\sigma_i\).
For \(T > 0\), the set of real numbers modulo \(T\) is denoted \(\Rt{T}\), with
\(\Rt{\infty} \eqdef \R\). 
The gradient of a matrix-valued function 
\(A : \R^m \rightarrow \R^{n\times n}\) is the block matrix of stacked partial
derivatives, 
\(\nabla_xA \eqdef [\partial A/\partial x_1; \cdots; \partial A/\partial x_m] \in
\R^{nm \times n}\).
Given two matrices \(A \in \R^{n \times m}\) and \(B \in \R^{r \times s}\), the
Kronecker product (see \cite{kronprod}) is the matrix  
\(A \otimes B \in \R^{nr \times ms}\)  defined as
\begin{equation}\label{eqn:kronprod}
    A \otimes B = \begin{bmatrix}
        A_{1,1}B & \cdots & A_{1,m} B \\
        \vdots & \ddots & \vdots \\
        A_{n,1} B & \cdots & A_{n,m} B
    \end{bmatrix} 
    .
\end{equation}
The Poisson bracket \cite{landau_mechanics} between the functions
\(f(q,p)\) and \(g(q,p)\) is
\begin{equation}\label{eqn:poisson-bracket}
    [f,g] \eqdef \sum \limits_{i=1}^n \pdiff{f}{p_i}\pdiff{g}{q_i} - 
        \pdiff{f}{q_i}\pdiff{g}{p_i}
    .
\end{equation}
The Kronecker delta \(\delta_i^j\) is \(1\) if \(i = j\) and \(0\)
otherwise.
Finally, we say a function \(\Delta(I)\) is \(O(I^2)\)
if \(\lim_{I \to 0} \Delta(I)/I = 0\).

\section{Problem Formulation}\label{sec:problem-formulation}
We will use the simplified acrobot model in Figure \ref{fig:acrobot}, where we assume the torso and leg rods are of equal length \(l\) with equal point masses \(m\) at the tips. The acrobot's configuration is described in generalized coordinates \(q=(q_u,q_a)\) on the configuration manifold \(\mathcal{Q} = \Sone \times \Sone\), where \(q_u\) is unactuated and  \(q_a\) is actuated. We ignore dissipative forces in this model.

In what follows, we denote by \(c_u \eqdef \cos(q_u)\), \(c_a \eqdef \cos(q_a)\), and \(c_{ua} \eqdef \cos(q_u + q_a)\); likewise, \(s_u \eqdef \sin(q_u)\), \(s_a \eqdef \sin(q_a)\), and \(s_{ua} \eqdef \sin(q_u + q_a)\).

The acrobot has inertia matrix \(M\), potential function \(V\) (with respect to the horizontal bar), and input matrix \(B\) given as follows:
\begin{align}\label{eqn:acrobot-inertia}
    M(q) &= \begin{bmatrix}
        ml^2\left(3+2 c_a\right) & 
        ml^2\left(1+ c_a \right) \\
        ml^2\left(1+ c_a\right) &
        ml^2
    \end{bmatrix} 
    , \\
    \label{eqn:acrobot-potential}
    V(q) &= -mgl\left(2 c_u + c_{ua}\right)
    , \\
    \label{eqn:acrobot-B}
    B &= [0;1]
    .
\end{align}
For reasons that will become clear later in this article, we
use Hamiltonian mechanics to derive the dynamics of the acrobot. 
To this end, recalling the partitioning  \(q= (q_u,q_a)\), we define the conjugate of momenta, \(p = (p_u,p_a) = M(q)\dot{q}\) and the Hamiltonian function
\begin{equation}\label{eqn:acrobot-hamiltonian}
    \mathcal{H}(q,p) = \frac{1}{2}p\tpose \Minv(q) p -
   mgl\left(2 c_u + c_{ua}\right).
  \end{equation}
The dynamics of the
acrobot with Hamiltonian function~\eqref{eqn:acrobot-hamiltonian} are
\begin{equation}
  \label{eq:acrobot:model}
  \begin{aligned}
&    \dot{q} = \Minv(q) p,\\
&    \dot{p}_u = -mgl\left(2s_u + s_{ua}\right),\\
&     \dot{p}_a =-\frac{1}{2}p\tpose \nabla_{q_a}\Minv(q) p - m g l s_{ua} + \tau,
  \end{aligned}
\end{equation}
where \(\tau\), the control input, is the hip joint torque affecting only the dynamics of
\(p_a\).

Our goal is to design a smooth function \(f : \Sone \times \R \to \Sone\) such that the
relation \(q_a = f(q_u,p_u)\) for system \eqref{eqn:acrobot-hamiltonian} can be
enforced asymptotically via feedback control (in Section \ref{sec:vnhc} we call
this a  virtual nonholonomic constraint).   We will further require that the
dynamics of the acrobot, when the relation holds, gain or lose energy in a sense
that will be defined precisely in Section~\ref{sec:energy-inject-diss}.

\section{Preliminaries on \vnhcs}\label{sec:vnhc}
Before embarking on the design problem, we must summarize the relevant theory of \vnhcs for a class of mechanical systems we call ``simply actuated Hamiltonian systems". Although the results in Section \ref{sec:vnhc-vnhc} do not appear in the literature in the same form, we will show that they are a special case of results that appeared in \cite{hybrid_zero_dynamics_bipedal_nhvcs}.

\subsection{Simply Actuated Hamiltonian Systems} \label{sec:vnhc-sah}
Consider a mechanical system modelled with generalized coordinates  \(q = (q_1, \ldots, q_n)\) on a configuration manifold \(\mathcal{Q} = \Rt{T_1} \times \cdots \times \Rt{T_n}\), where \(T_i = 2\pi\) if \(q_i\) is an angle and \(T_i = \infty\) if \(q_i\) is a displacement. The corresponding generalized velocities are \(\dot{q} = (\dot{q}_1,\ldots,\dot{q}_n) \in \R^n\).

Suppose this system has Lagrangian \(\mathcal{L}(q,\dot{q}) = 1/2~\dot{q}^T D(q) \dot{q} - P(q)\), where the potential energy  \(P : \mathcal{Q} \rightarrow \mathbb{R}\)  is smooth, and the inertia matrix  \(D : \mathcal{Q} \rightarrow \mathbb{R}^{n \times n}\) is smooth and positive definite for all \(q \in \mathcal{Q}\). The \textit{conjugate momentum} to \(q\) is the vector \(p \eqdef \partial\mathcal{L}/\partial\dot{q} = D(q) \dot{q} \in \R^n\). As per \cite{landau_mechanics},  the \textit{Hamiltonian} of the system in \((q,p)\) coordinates is
\begin{equation}\label{eqn:hamiltonian}
    \mathcal{H}(q,p) = \frac{1}{2} p\tpose D\inv(q) p + P(q)
    ,
\end{equation}
with dynamics
\begin{equation}\label{eqn:hamiltonian-eom-general}
    \begin{cases}
        \dot{q} = \nabla_p\mathcal{H} 
        , \\
        \dot{p} = -\nabla_q\mathcal{H} + B \, \tau
        ,
    \end{cases}
\end{equation}
where \(\tau \in \R^k\) is a vector of generalized input forces and the input matrix \(B \in \R^{n \times k}\) is assumed to be constant and of full rank \(k\). If \(k < n\), we say the system is \textit{underactuated} with degree of underactuation \((n-k)\).

Using the matrix Kronecker product, it is easy to show that
\eqref{eqn:hamiltonian-eom-general} expands to
\begin{equation*}\label{eqn:hamiltonian-full-dynamics}
     \begin{cases}
        \dot{q} = D\inv(q)p 
        , \\
        \dot{p} = -\frac{1}{2} (\Id{n} \otimes p\tpose) \nabla_q D\inv(q) p
        - \nabla_q P(q) + B \,\tau
        . \\
    \end{cases}
\end{equation*}

Because \(\tau\) is transformed by \(B\), it is not obvious how any particular input force \(\tau_i\) affects the system.

When \(\mathcal{Q} = \R^n\), one may define a canonical coordinate transformation of \eqref{eqn:hamiltonian} which decouples the input forces. To define this transformation we will make use of the following lemma\footnote{Lemma~\ref{lemma:B-orthogonal} and Theorem~\ref{thm:simply-actuated} are obvious, but we include their proofs for completeness.}.

\begin{lemma}\label{lemma:B-orthogonal}
There exists a nonsingular matrix \(\hat{T} \in \R^{k \times k}\) 
    so that the regular feedback transformation 
    \[
        \tau = \hat{T} \hat{\tau}
    \] 
    has a new input matrix \(\hat{B}\) for \(\hat{\tau}\) which is left
    semi-orthogonal.  
\end{lemma}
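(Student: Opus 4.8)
The plan is to recognize that this is a pure linear-algebra statement about the full-rank matrix $B$. Applying the regular feedback $\tau = \hat{T}\hat{\tau}$ to~\eqref{eqn:hamiltonian-eom-general} replaces the input term $B\tau$ by $(B\hat{T})\hat{\tau}$, so the new input matrix for $\hat{\tau}$ is simply $\hat{B} = B\hat{T}$. The requirement that $\hat{B}$ be left semi-orthogonal, i.e. $\hat{B}\tpose\hat{B} = \Id{k}$, then reads $\hat{T}\tpose(B\tpose B)\hat{T} = \Id{k}$. So the lemma reduces to finding a nonsingular $\hat{T}$ that symmetrically whitens the Gram matrix $B\tpose B$.

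First I would observe that since $B \in \R^{n\times k}$ has full column rank $k$, the matrix $G \eqdef B\tpose B \in \R^{k\times k}$ is symmetric and positive definite: for any nonzero $v \in \R^k$ we have $v\tpose G v = \norm{Bv}^2 > 0$ because $Bv \neq 0$. Consequently $G$ admits a factorization $G = S\tpose S$ with $S \in \R^{k\times k}$ nonsingular --- for instance the Cholesky factor, or the symmetric positive-definite square root obtained from the spectral decomposition $G = U\Lambda U\tpose$ (with $U$ orthogonal and $\Lambda$ diagonal positive) by setting $S = \Lambda^{1/2}U\tpose$.

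Then I would define $\hat{T} \eqdef S\inv$, which is nonsingular, and verify directly that
\[
  \hat{B}\tpose\hat{B} \;=\; \hat{T}\tpose B\tpose B\,\hat{T} \;=\; S^{-\mathsf{T}}(S\tpose S)S\inv \;=\; \Id{k},
\]
so $\hat{B} = B\hat{T}$ is left semi-orthogonal. Since $\tau = \hat{T}\hat{\tau}$ with $\hat{T}$ constant and invertible is a regular feedback transformation, this establishes the claim. There is no genuine obstacle here; the only step that needs to be stated with care is the positive-definiteness of $B\tpose B$, which is precisely where the standing assumption that $B$ has full rank $k$ is used. Everything else is a one-line computation.
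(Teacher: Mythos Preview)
Your proof is correct. The paper takes a slightly different but equivalent route: it applies the singular value decomposition $B = U\tpose \Sigma V$ and sets $\hat{T} = V\,\diag{1/\sigma_1,\ldots,1/\sigma_k}$, so that $\hat{B}\tpose\hat{B} = \diag{1/\sigma_i}\,\Sigma\tpose\Sigma\,\diag{1/\sigma_i} = \Id{k}$. Your argument bypasses the SVD by working directly with the Gram matrix $G = B\tpose B$ and any square-root factorization $G = S\tpose S$ (Cholesky or spectral), then taking $\hat{T} = S\inv$. The two constructions coincide when one chooses $S = \diag{\sigma_i}V$, since $B\tpose B = V\tpose\diag{\sigma_i^2}V$; your version is arguably more elementary in that it only needs positive-definiteness of $B\tpose B$ and the existence of a square root, whereas the paper's version names a specific factorization via the SVD. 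Either way the crux is exactly what you identified: full column rank of $B$ makes $B\tpose B$ positive definite, and any whitening of that matrix does the job.
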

\begin{proof}
    Since \(B\) is constant and full rank, it has a singular value decomposition 
    \(B = U\tpose \Sigma V\) where 
    \(\Sigma = [\diag{\sigma_1,\ldots,\sigma_k}; \Zmat{(n-k)\times k}]\),
    \(\sigma_i > 0\), and \(U \in R^{n \times n}\),
    \(V \in \R^{k \times k}\) are unitary matrices \cite{calculating_svd}.
    Defining \(T = \diag{1/\sigma_1,\ldots,1/\sigma_k}\) and assigning the
    regular feedback transformation \(\tau = V T \hat{\tau}\) yields a new input
    matrix \(\hat{B} = B V T\) for \(\hat{\tau}\) such that
    \(\hat{B}\tpose \hat{B} = T\tpose \Sigma\tpose \Sigma T = \Id{k}\).
\end{proof}

In light of Lemma \ref{lemma:B-orthogonal}, there is no loss of generality in
assuming that the input matrix \(B\) is left semi-orthogonal, which means that
\(B\tpose\) is right semi-orthogonal.
Now, let \(\mathbf{B} \eqdef [B^\perp; B\tpose]\) where 
\(B^\perp \in \R^{(n-k)\times k}\) is a full-rank left annihilator of \(B\), i.e.,
\(B^\perp B = \Zmat{(n-k) \times k}\).
Since \(B\) is constant, such a \(B^\perp\) exists and
\(\mathbf{B}\) is invertible.

The following theorem requires that \(\mathcal{Q} = \R^n\) so that the
coordinate transformation is well defined.

\begin{thm}\label{thm:simply-actuated}
For the Hamiltonian system \eqref{eqn:hamiltonian} with configuration manifold \(\mathcal{Q} = \R^n\), the coordinate transformation \((\tilde q,\tilde p) = \left(\mathbf{B}\tpose)\inv q,\mathbf{B}p\right)\) is a canonical transformation. The resulting Hamiltonian function is 
    \begin{equation}
    \label{eqn:simple-hamiltonian}
        \mathcal{H}(\tilde{q},\tilde{p}) = 
        \frac{1}{2} \tilde{p}\tpose \Minv(\tilde{q}) \tilde{p} + V(\tilde{q}),
      \end{equation}
      and the associated Hamiltonian dynamics are
      \begin{equation}
        \label{eq:Hamiltonian_system:simply_actuated}
          \begin{aligned}
            &           \dot{\tilde{q}} = \Minv(\tilde{q})\tilde{p}, \\
            &           \dot{\tilde{p}} = -\frac{1}{2} (\Id{n} \otimes \tilde{p}\tpose)
              \nabla_{\tilde{q}} \Minv(\tilde{q}) \tilde{p} \\
            &          \phantom{---} - \nabla_{\tilde{q}} V(\tilde{q}) + \simpleB \tau,
          \end{aligned}
  \end{equation}
  where 
    \(\Minv(\tilde{q}) \eqdef 
    (\mathbf{B}\tpose)\inv D^{-1}(\mathbf{B}\tpose \tilde{q})\mathbf{B}\inv\)
    and
    \(V(\tilde{q}) \eqdef P(\mathbf{B}\tpose \tilde{q})\).
\end{thm}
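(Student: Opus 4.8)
The plan is to verify the theorem's three assertions in order: that the map $(\tilde q,\tilde p)=\bigl((\mathbf{B}\tpose)\inv q,\mathbf{B}p\bigr)$ is canonical, that the transformed Hamiltonian is \eqref{eqn:simple-hamiltonian}, and that the transformed equations of motion are \eqref{eq:Hamiltonian_system:simply_actuated}. The whole argument is a linear change of variables; I do not expect a genuine obstacle, only bookkeeping of transposes and inverses together with one structural cancellation.

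For canonicity I would argue as follows. The map is linear, of the form $\tilde q=Aq$, $\tilde p=Cp$ with $A=(\mathbf{B}\tpose)\inv$ and $C=\mathbf{B}$; since $\mathbf{B}$ is nonsingular and $\mathcal{Q}=\R^n$ it is a global diffeomorphism of the phase space $\R^n\times\R^n$. A linear map of this form preserves the canonical two-form $\sum_i dq_i\wedge dp_i$ precisely when $A\tpose C=\Id{n}$, and here $A\tpose C=\mathbf{B}\inv\mathbf{B}=\Id{n}$; equivalently, $C=(A\inv)\tpose$, so the map is the cotangent lift of the configuration-space diffeomorphism $q\mapsto(\mathbf{B}\tpose)\inv q$ and is symplectic for that reason. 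Either way, it is canonical.

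For the Hamiltonian I would substitute the inverse relations $q=\mathbf{B}\tpose\tilde q$ and $p=\mathbf{B}\inv\tilde p$ into \eqref{eqn:hamiltonian}, using $(\mathbf{B}\inv)\tpose=(\mathbf{B}\tpose)\inv$; the kinetic term becomes $\tfrac12\tilde p\tpose(\mathbf{B}\tpose)\inv D\inv(\mathbf{B}\tpose\tilde q)\mathbf{B}\inv\tilde p$ and the potential becomes $P(\mathbf{B}\tpose\tilde q)$, so defining $\Minv(\tilde q)\eqdef(\mathbf{B}\tpose)\inv D\inv(\mathbf{B}\tpose\tilde q)\mathbf{B}\inv$ and $V(\tilde q)\eqdef P(\mathbf{B}\tpose\tilde q)$ recovers \eqref{eqn:simple-hamiltonian}. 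I would also note that $\Minv(\tilde q)$ is symmetric and positive definite, being a congruence transform of $D\inv>0$, so it is a legitimate inertia matrix.

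For the dynamics, linearity gives $\dot{\tilde q}=(\mathbf{B}\tpose)\inv\dot q$ and $\dot{\tilde p}=\mathbf{B}\dot p$, and the chain rule applied to $\mathcal{H}$ written in the new coordinates yields $\nabla_{\tilde q}\mathcal{H}=\mathbf{B}\,\nabla_q\mathcal{H}$ and $\nabla_{\tilde p}\mathcal{H}=(\mathbf{B}\tpose)\inv\nabla_p\mathcal{H}$ along the transformation. Then \eqref{eqn:hamiltonian-eom-general} gives $\dot{\tilde q}=(\mathbf{B}\tpose)\inv\nabla_p\mathcal{H}=\nabla_{\tilde p}\mathcal{H}=\Minv(\tilde q)\tilde p$ and $\dot{\tilde p}=\mathbf{B}\bigl(-\nabla_q\mathcal{H}+B\tau\bigr)=-\nabla_{\tilde q}\mathcal{H}+\mathbf{B}B\,\tau$. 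The one step I would flag is the cancellation $\mathbf{B}B=[B^\perp;B\tpose]B=[B^\perp B;\,B\tpose B]=\simpleB$, which uses $B^\perp B=\Zmat{(n-k)\times k}$ by construction of $B^\perp$ and $B\tpose B=\Id{k}$ because, by Lemma~\ref{lemma:B-orthogonal}, $B$ may be assumed left semi-orthogonal. Finally, expanding $\nabla_{\tilde q}\mathcal{H}$ via the Kronecker-product identity $\nabla_{\tilde q}\bigl(\tfrac12\tilde p\tpose\Minv(\tilde q)\tilde p\bigr)=\tfrac12(\Id{n}\otimes\tilde p\tpose)\nabla_{\tilde q}\Minv(\tilde q)\tilde p$ — the same identity used to expand \eqref{eqn:hamiltonian-eom-general} above — yields exactly \eqref{eq:Hamiltonian_system:simply_actuated}. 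The main thing that can go wrong here is not difficulty but sign/transpose bookkeeping; the cotangent-lift viewpoint is what makes the canonicity step, the only place one might worry, essentially immediate.
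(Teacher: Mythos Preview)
Your proof is correct and follows essentially the same outline as the paper's: verify canonicity of the linear change of variables, substitute to obtain the new Hamiltonian, and compute $\mathbf{B}B=\simpleB$ using $B^\perp B=0$ and $B\tpose B=\Id{k}$. The only notable difference is in how canonicity is checked: the paper verifies the Poisson-bracket conditions $[\tilde q_i,\tilde q_j]=[\tilde p_i,\tilde p_j]=0$ and $[\tilde p_i,\tilde q_j]=\delta_i^j$ directly (citing Landau), whereas you use the equivalent symplectic-form condition $A\tpose C=\Id{n}$ and the cotangent-lift interpretation. Both are standard characterizations; yours is arguably more concise and self-contained, while the paper's matches the reference it cites. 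Your explicit chain-rule verification of the transformed dynamics is a bit more detailed than the paper's, which simply invokes canonicity to conclude the Hamiltonian structure is preserved and then reads off the new input matrix.
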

\begin{proof}
    Since \(\mathbf{B}\) is constant, this transformation satisfies
    \(\partial\tilde{q}_i/\partial p_j = \partial\tilde{p}_i/\partial q_j = 0\) for all 
    \(i,j \in \{1,\ldots,n\}\).
    This implies the Poisson brackets \([\tilde{q}_i, \tilde{q}_j]\)
    and \([\tilde{p}_i,\tilde{p}_j]\) are both zero.
    Then, one can show that
    \([\tilde{p}_i, \tilde{q}_j] = (\mathbf{B}\tpose)\inv_i (\mathbf{B}\tpose)_j
        = \delta_i^j\).
    By (45.10) in \cite{landau_mechanics}, this is a canonical transformation
    and the new Hamiltonian is
    \(\mathcal{H}(\mathbf{B}\tpose \tilde{q}, \mathbf{B}\inv \tilde{p})\).
    Finally, since \(\dot{\tilde{p}} = \mathbf{B} \dot{p}\), the input
    matrix for the system in \((\tilde{q},\tilde{p})\) coordinates is
    \(\mathbf{B}B = [\Zmat{(n-k)\times k}; \Id{k}]\), which proves the theorem.
\end{proof}

We call the \((\tilde{q},\tilde{p})\) coordinates \textit{simply actuated coordinates}, and we call any Hamiltonian system of the form~\eqref{eq:Hamiltonian_system:simply_actuated} a  \textit{simply actuated Hamiltonian system}. The first \((n-k)\) configuration variables in \(\tilde{q}\), labelled \(q_u\), are the \textit{unactuated coordinates};  the remaining \(k\) configuration variables, labelled \(q_a\), are the \textit{actuated coordinates}. The corresponding \((p_u, p_a)\) in \(\tilde{p}\) are the \textit{unactuated} and \textit{actuated momenta}, respectively.

In this article we focus on the acrobot which, despite having a configuration
manifold which is not \(\R^n\), is already a simply actuated Hamiltonian system.

\subsection{Virtual Nonholonomic Constraints}\label{sec:vnhc-vnhc}

Griffin and Grizzle \cite{nhvc_dynamic_walking} were the first to define relative degree two nonholonomic constraints which can be enforced through state feedback. Horn \etal extended their results in \cite{hybrid_zero_dynamics_bipedal_nhvcs} to derive the constrained dynamics for a certain class of mechanical systems.  Hamed and Ames \cite{nonholonomic_hybrid_zero_dynamics} then found an explicit representation of these constrained dynamics for \vnhcs with a specific form. All these researchers made use of the unactuated conjugate momentum, but they developed their results in the Lagrangian framework. In particular, they focused on Lagrangian systems with degree of underactuation one. The recent paper~\cite{SimStraBloCol23} has initiated a systematic geometric investigation of \vnhcs that are affine functions of the configuration velocities. The ones used in this paper are nonlinear functions of the configuration velocities.

We will now present a special case of
\cite{hybrid_zero_dynamics_bipedal_nhvcs} for simply actuated Hamiltonian
systems, so that the theory we apply to the acrobot is provided in its clearest
form. After we finish this summary we will clarify the relationship between our
material and that of 
\cite{hybrid_zero_dynamics_bipedal_nhvcs,nonholonomic_hybrid_zero_dynamics}.

The rest of this section refers to the simply actuated system
\eqref{eqn:simple-hamiltonian}. For simplicity of notation, we relabel
\((\tilde{q},\tilde{p})\) to \((q,p)\). We assume henceforth that 
$(q,p) \in \cQ \times \Re^n$.

\begin{defn}\label{defn:vnhc}
    A \textit{virtual nonholonomic constraint} (\vnhc) \textit{of order \(k\)}
    is a relation \(h(q,p) = 0\) where 
    \(h : \mathcal{Q}\times\R^n \rightarrow \R^k\) is \(C^2\),
    \(\rank{\left[ dh_q,\, dh_p \right]} = k\) for all 
    \((q,p) \in h\inv(0)\), and there exists a feedback
    controller \(\tau(q,p)\) rendering the \textit{constraint manifold}
    \(\Gamma\) invariant, where
\begin{equation}
        \Gamma = \left\{(q,p) \mid h(q,p) = 0, dh_q \dot{q} + dh_p \dot{p} = 0\right\}.    \end{equation}
\end{defn}

The constraint manifold is a \(2(n-k)\)-dimensional closed embedded submanifold
of \(\mathcal{Q} \times \R^n\). A \vnhc thereby allows us to study a
reduced-order model of the system: it reduces the original \(2n\) differential
equations to \(2(n-k)\) equations. In particular, if the mechanical system has
degree of underactuation one, i.e., \(k = (n-1)\), the constraint manifold is
\textit{always} two-dimensional.

In order to enforce the constraint \(h(q,p) = 0\), we want to asymptotically
stabilize the set \(\Gamma\). To see when this is possible, let us define the
error output \(e = h(q,p)\). If any component of \(e_i\) has relative degree 1,
we may not be able to stabilize \(\Gamma\): we can always guarantee 
\(e_i \to 0\), but not necessarily \(\dot{e}_i \to 0\). It is for this reason
that we define the following special type of \vnhc.

\begin{defn}
A \vnhc \(h(q,p) = 0\) of order \(k\) is \textit{regular} if system~\eqref{eq:Hamiltonian_system:simply_actuated} with output \(e = h(q,p)\) has vector relative degree \(\{2,2.\ldots,2\}\) everywhere on the constraint manifold \(\Gamma\).
\end{defn}

The reader is referred to~\cite{Isi95} for the definition of vector relative degree. The authors of \cite{nhvc_dynamic_walking,hybrid_zero_dynamics_bipedal_nhvcs} observed that relations which use only the unactuated conjugate momentum often have vector relative degree \(\{2,\ldots,2\}\). Indeed, we shall now prove that regular constraints cannot depend on the actuated momentum.

To ease notation in the rest of this section, we use the following shorthand:
\begin{align}
    \mathcal{A}(q,p_u) &\eqdef dh_q(q,p_u) \Minv(q) 
        ,\\
    \mathcal{M}(q,p) &\eqdef (\Id{n-k} \otimes p\tpose)\nabla_{q_u}\Minv(q) 
    .
\end{align}

\begin{thm}\label{thm:vnhc-regularity}
    A relation \(h(q,p) = 0\) for system \eqref{eqn:simple-hamiltonian}
    is a regular \vnhc of order \(k\) if and only if 
    \(dh_{p_a} = \Zmat{k \times k}\) 
    and the decoupling matrix
    \begin{equation}\label{eqn:decoupling-matrix}
        H(q,p) \eqdef \big(\mathcal{A}(q,p_u) - dh_{p_u}\mathcal{M}(q,p)\big)\simpleB
         ,
     \end{equation}
    is invertible everywhere on the constraint manifold \(\Gamma\).
\end{thm}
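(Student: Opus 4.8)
I would prove Theorem~\ref{thm:vnhc-regularity} directly from the definition of vector relative degree by computing the first two time derivatives of the output $e = h(q,p)$ along the dynamics~\eqref{eq:Hamiltonian_system:simply_actuated} and identifying exactly when the input $\tau$ first appears with an invertible coefficient. The claim has two parts to establish: (i) that the differentiation of $e$ once cannot already bring in $\tau$ --- equivalently, that relative degree at least one is automatic --- and in fact relative degree exactly two forces $dh_{p_a}=0$; and (ii) that given $dh_{p_a}=0$, the coefficient of $\tau$ in $\ddot e$ is precisely the matrix $H(q,p)$ in~\eqref{eqn:decoupling-matrix}, so regularity is equivalent to $H$ being invertible on $\Gamma$.

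First I would compute $\dot e = dh_q\,\dot q + dh_p\,\dot p$. Substituting $\dot q = \Minv p$ and the $\dot p$ equation, and splitting $dh_p = [dh_{p_u}, dh_{p_a}]$ and $\dot p = (\dot p_u, \dot p_a)$, the only place $\tau$ can enter at this stage is through $dh_{p_a}\cdot(\text{coefficient of }\tau\text{ in }\dot p_a)$, since the input matrix is $\simpleB$ and only the actuated momenta are directly forced. Because the coefficient of $\tau$ in $\dot p_a$ is $\Id{k}$, the term is exactly $dh_{p_a}\,\tau$. Hence: if $dh_{p_a}\ne 0$ at a point of $\Gamma$, the output has a component of relative degree one there (or lower), contradicting regularity; conversely, demanding relative degree $\{2,\dots,2\}$ forces $dh_{p_a}=\Zmat{k\times k}$ on $\Gamma$. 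This disposes of the "only if" direction for the $dh_{p_a}$ condition, and sets up the rest under the standing assumption $dh_{p_a}=0$.

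Next, assuming $dh_{p_a}=0$, I would write $\dot e = \mathcal{A}(q,p_u)\,p - dh_{p_u}\big(\tfrac12(\Id{n-k}\otimes p\tpose)\nabla_{q_u}\Minv p + \nabla_{q_u}V\big)$, using the shorthand $\mathcal A$ and noticing that only the $q_u$-block of $\nabla_q$ survives when contracted against $dh_{p_u}\in\R^{k\times(n-k)}$ --- this is where the operator $\mathcal M(q,p)$ naturally appears. Then I differentiate once more. The input $\tau$ enters $\ddot e$ only through the terms where a $\dot p$ has been differentiated, i.e. through $\partial(\dot e)/\partial p \cdot \dot p$ evaluated on the $\tau$-component of $\dot p$, which is $\simpleB\tau$. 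Collecting these contributions, the coefficient of $\tau$ is $\big(\mathcal A(q,p_u) - dh_{p_u}\mathcal M(q,p)\big)\simpleB = H(q,p)$: the $\mathcal A\,\simpleB$ piece comes from differentiating the linear-in-$p$ term $\mathcal A p$, and the $dh_{p_u}\mathcal M\,\simpleB$ piece comes from differentiating the quadratic term $\tfrac12 dh_{p_u}(\Id{n-k}\otimes p\tpose)\nabla_{q_u}\Minv p$ with respect to $p$ (the factor $\tfrac12$ disappears because $p$ appears twice, once in the Kronecker slot and once as the trailing factor). By the standard relative-degree characterization (see~\cite{Isi95}), vector relative degree $\{2,\dots,2\}$ on $\Gamma$ is then equivalent to invertibility of this decoupling matrix $H$ there, which also supplies, via the usual input-output linearization formula $\tau = H\inv(v - (\text{drift terms}))$, the feedback controller rendering $\Gamma$ invariant as required by Definition~\ref{defn:vnhc}. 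Finally I would note that the rank condition $\rank{[dh_q,\,dh_p]}=k$ together with $dh_{p_a}=0$ and invertibility of $H$ is consistent, so $h=0$ is indeed a \vnhc.

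The main obstacle is bookkeeping: correctly tracking which blocks of the gradients $\nabla_q\Minv$ and $\nabla_q V$ survive after contraction with the various pieces of $dh$, and verifying that the quadratic-in-$p$ term differentiates to produce exactly $dh_{p_u}\mathcal M(q,p)\simpleB$ with no stray factor of $2$ or $1/2$ and with the Kronecker-product index conventions of the paper respected. There is no deep difficulty --- it is the Lie-derivative computation $L_gL_f h$ for a mechanical control system --- but the Kronecker-product notation for $\nabla_q\Minv$ makes it easy to misplace a transpose or a block, so I would set up the index conventions carefully before contracting.
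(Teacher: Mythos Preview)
Your proposal is correct and follows essentially the same approach as the paper: compute $\dot e$ to see that $\tau$ appears iff $dh_{p_a}\neq 0$, then under $dh_{p_a}=0$ compute $\ddot e$ and identify the coefficient of $\tau$ as $H(q,p)=(\mathcal A - dh_{p_u}\mathcal M)\simpleB$, concluding regularity is equivalent to invertibility of $H$ on $\Gamma$. Your added remarks on why the factor $\tfrac12$ is absorbed (symmetry of $\partial_{q_i}\Minv$ and the bilinearity in $p$) and on the existence of the input--output linearizing feedback are welcome elaborations, but the argument is the same as the paper's.
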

\begin{proof}
    Let \(e = h(q,p) \in \R^k\).
    If \(dh_{p_a} \neq \Zmat{k\times k}\) for some \((q,p) \in \Gamma\), 
    then \(\tau\) appears in \(\dot{e}\) and the \vnhc is not of relative degree
    \(\{2,\ldots,2\}\). Suppose now that \(dh_{p_a} = \Zmat{k\times k}\).
    Then 
    \(\dot{e} = \mathcal{A}(q,p_u)p - 
     dh_{p_u}\left(1/2~\mathcal{M}(q,p)p + \nabla_{q_u}V(q)\right)\).
    Taking one further derivative provides
    \( \ddot{e} = (\star) - 
        dh_{p_u}\left(1/2~d/dt\left(\mathcal{M}(q,p)p\right)\right) 
        + \mathcal{A}(q,p_u)[\Zmat{(n-k)\times k};\Id{k}] \tau\),
    where \((\star)\) is a continuous function of \(q\) and \(p\).
    One can further show that
    \(dh_{p_u}\left(1/2~d/dt~\left(\mathcal{M}(q,p)p\right)\right)
        = (\star) + dh_{p_u}\mathcal{M}(q,p)[\Zmat{(n-k)\times k};
        \Id{k}]\tau\).
    Hence, $\ddot e$ has the form \( \ddot{e} = E(q,p) + H(q,p)\tau\) for appropriate \(E\).
    From the definition of regularity, the \vnhc \(h\) is regular 
    when \(e\) is of relative degree \(\{2,\ldots,2\}\), which is true 
    if and only if the matrix $H(q,p)$ premultiplying \(\tau\) is nonsingular. This proves the theorem.
\end{proof}

Under additional mild conditions (see \cite{vhcs_for_el_systems}), the constraint manifold associated with a regular \vnhc of order \(k\) can be asymptotically stabilized by the input-output feedback linearizing controller
\begin{equation}\label{eqn:stabilizing-controller}
    \tau(q,p) = -H\inv(q,p)\left(E(q,p) + k_p e + k_d \dot{e}\right)
    ,
\end{equation}
where \(k_p, k_d > 0\) are control parameters which can be tuned on the
resulting error dynamics \(\ddot{e} = -k_p e - k_d \dot{e}\).

In Section \ref{sec:acrobot} we will enforce a regular constraint on the
acrobot of the form \(h(q,p) = q_a - f(q_u,p_u)\), where the actuators track a
function of the unactuated variables.
Regular constraints of this form always meet the mild conditions from
\cite{vhcs_for_el_systems}, and hence we can stabilize the constraint manifold
using \eqref{eqn:stabilizing-controller}.
Since \(q_a\) is constrained to be a function of the unactuated variables,
intuition tells us the constrained dynamics should be parameterized by 
\((q_u, p_u)\).
Unfortunately, \(\dot{q}_u\) depends on \(p_a\), and for general systems one
cannot solve explicitly for \(p_a\) in terms of \((q_u,p_u)\) because
the \(\dot{p}\) dynamics contain the coupling term 
\((\Id{n} \otimes p\tpose)\nabla_{q}M(q)p\). 
We now introduce an assumption which allows us to solve for \(p_a\) as a
function of \((q_u,p_u)\), which in turn allows us to explicitly solve 
for the constrained dynamics.

\begin{assm}\label{assm:inertially-actuated}
The inertia matrix does not depend on the unactuated coordinates, i.e.,
\(\nabla_{q_u}M(q) = \Zmat{n(n-k) \times n}\).
\end{assm}

\begin{thm}\label{thm:zero-dynamics}
    Let \(\mathcal{H}\) be a Hamiltonian system in simply actuated form
    \eqref{eqn:simple-hamiltonian} satisfying
    Assumption \ref{assm:inertially-actuated}. 
    Let \(h(q,p_u) = q_a - f(q_u,p_u)\) be a regular \vnhc of order \(k\) with
    constraint manifold \(\Gamma\). 
    Then the constrained dynamics are given by
    \begin{equation}\label{eqn:qpu-dynamics}
        \left.\begin{aligned}
                \dot{q}_u &= 
                \left[\Id{(n-k)} ~ \Zmat{(n-k) \times k}\right]
                \Minv(q)p \\
            \dot{p}_u &= -\nabla_{q_u}V(q) \\
            \end{aligned}{}\right|_{\begin{array}{c}
                q_a = f(q_u,p_u) \\ 
                p_a = g(q_u,p_u) \\
            \end{array}}
            ,
    \end{equation}
    where
    \begin{multline}\label{eqn:g-qpu}
        g(q_u,p_u) \eqdef \left( \mathcal{A}(q,p_u)
        \begin{bmatrix} \Zmat{(n-k)\times k} \\ \Id{k} \end{bmatrix}\right)\inv 
        \cdot
        \Big(dh_{p_u}\nabla_{q_u}V(q) -
        \\
        \mathcal{A}(q,p_u)
        \begin{bmatrix} \Id{(n-k)} \\ \Zmat{k \times (n-k)} \end{bmatrix} p_u
        \left.\Big)\right|_{q_a = f(q_u,p_u)}
        .
    \end{multline}

\end{thm}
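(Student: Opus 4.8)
The plan is to characterize the constraint manifold $\Gamma$ as a graph over the coordinates $(q_u,p_u)$ and then restrict the full Hamiltonian dynamics~\eqref{eq:Hamiltonian_system:simply_actuated} to this graph. Since $h = q_a - f(q_u,p_u)$, the first defining equation of $\Gamma$, $h(q,p)=0$, immediately gives $q_a = f(q_u,p_u)$. The second defining equation, $dh_q\dot q + dh_p\dot p = 0$, must be solved for $p_a$. First I would compute $dh_q = [-\nabla_{q_u}f \;\; \Id{k}]$ and $dh_{p_u} = -\nabla_{p_u}f$, $dh_{p_a} = \Zmat{k\times k}$ (the last by regularity, via Theorem~\ref{thm:vnhc-regularity}). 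Substituting the dynamics~\eqref{eq:Hamiltonian_system:simply_actuated} and using the shorthand $\mathcal{A}(q,p_u) = dh_q\Minv(q)$, the condition $\dot h = 0$ becomes
\begin{equation*}
\mathcal{A}(q,p_u)\, p \;-\; dh_{p_u}\Bigl(\tfrac12\,\mathcal{M}(q,p)\,p + \nabla_{q_u}V(q)\Bigr) = 0,
\end{equation*}
exactly the expression for $\dot e$ appearing in the proof of Theorem~\ref{thm:vnhc-regularity}. Here is where Assumption~\ref{assm:inertially-actuated} enters: since $\nabla_{q_u}M(q) = 0$ implies $\nabla_{q_u}\Minv(q) = 0$ (differentiate $M\Minv = \Id{n}$), the term $\mathcal{M}(q,p) = (\Id{n-k}\otimes p\tpose)\nabla_{q_u}\Minv(q)$ vanishes identically. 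The constraint on momenta thus collapses to the \emph{linear} equation $\mathcal{A}(q,p_u)\,p = dh_{p_u}\nabla_{q_u}V(q)$ in $p$.

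Next I would split $p = [\Id{n-k};\Zmat{k\times(n-k)}]\,p_u + [\Zmat{(n-k)\times k};\Id{k}]\,p_a$ and solve the linear equation for $p_a$. This is solvable precisely because the coefficient of $p_a$, namely $\mathcal{A}(q,p_u)[\Zmat{(n-k)\times k};\Id{k}] = H(q,p)$ (recall $\mathcal{M}\equiv 0$ here, so the decoupling matrix reduces to this), is invertible on $\Gamma$ by regularity. Rearranging gives exactly
\begin{equation*}
p_a = \Bigl(\mathcal{A}(q,p_u)[\Zmat{(n-k)\times k};\Id{k}]\Bigr)\inv\Bigl(dh_{p_u}\nabla_{q_u}V(q) - \mathcal{A}(q,p_u)[\Id{n-k};\Zmat{k\times(n-k)}]\,p_u\Bigr),
\end{equation*}
which, evaluated at $q_a = f(q_u,p_u)$, is the claimed formula~\eqref{eqn:g-qpu} for $g(q_u,p_u)$. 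One should note that $\mathcal{A}$ a priori depends on the full $q$, but on $\Gamma$ we have $q_a = f(q_u,p_u)$, so after this substitution everything is a function of $(q_u,p_u)$ alone; it is worth remarking explicitly that $\Gamma$ is thereby exhibited as a graph, consistent with its being $2(n-k)$-dimensional.

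Finally, I would restrict the $\dot q_u$ and $\dot p_u$ equations of~\eqref{eq:Hamiltonian_system:simply_actuated} to $\Gamma$. The $\dot q_u$ equation is the top $(n-k)$ block of $\dot q = \Minv(q)p$, i.e. $\dot q_u = [\Id{n-k}\;\;\Zmat{(n-k)\times k}]\Minv(q)p$; substituting $q_a = f$, $p_a = g$ gives the first line of~\eqref{eqn:qpu-dynamics}. For $\dot p_u$, the top $(n-k)$ block of the $\dot p$ equation is $-\tfrac12(\Id{n-k}\otimes p\tpose)\nabla_{q_u}\Minv(q)\,p - \nabla_{q_u}V(q)$, and again by Assumption~\ref{assm:inertially-actuated} the first term vanishes, leaving $\dot p_u = -\nabla_{q_u}V(q)$, the second line of~\eqref{eqn:qpu-dynamics}; note also that the input matrix $[\Zmat{(n-k)\times k};\Id{k}]$ has zero $(n-k)$-block, so $\tau$ does not appear here, as expected since $\Gamma$ is controlled-invariant. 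I do not anticipate a serious obstacle: the content is essentially bookkeeping, and the one genuine mechanism is the repeated use of Assumption~\ref{assm:inertially-actuated} to kill the quadratic coupling term, which is what makes the implicit relation for $p_a$ linear and hence explicitly solvable via the (invertible) decoupling matrix. The mildly delicate point to state carefully is that the invertibility of $\mathcal{A}(q,p_u)[\Zmat{(n-k)\times k};\Id{k}]$ on $\Gamma$ is exactly the regularity hypothesis specialized to $h = q_a - f(q_u,p_u)$ under Assumption~\ref{assm:inertially-actuated}, so $g$ is well-defined and smooth wherever $f$ is.
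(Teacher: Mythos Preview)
Your proposal is correct and follows essentially the same approach as the paper: use Assumption~\ref{assm:inertially-actuated} to reduce $\dot e$ to $\mathcal{A}(q,p_u)p - dh_{p_u}\nabla_{q_u}V(q)$, split $p$ into its $p_u$ and $p_a$ blocks, invoke regularity (which under the assumption makes the decoupling matrix exactly $\mathcal{A}(q,p_u)[\Zmat{(n-k)\times k};\Id{k}]$) to solve the linear equation for $p_a$, and then restrict the $(\dot q_u,\dot p_u)$ dynamics to $\Gamma$. Your write-up is more explicit than the paper's (which dispatches the last step with ``completes the proof''), but the argument is the same.
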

\begin{proof}
    Setting \(e = h(q,p_u)\) and using Assumption
    \ref{assm:inertially-actuated}, we find that
    \(\dot{e} = \mathcal{A}(q,p_u)p - dh_{p_u}\nabla_{q_u}V(q)\).
    Notice that
    \(\mathcal{A}(q,p_u)p = \mathcal{A}(q,p_u)[\Zmat{(n-k)\times k}; \Id{k}]p_a
    + \mathcal{A}(q,p_u)[\Id{n-k};\Zmat{k \times (n-k)}] p_u\).
    Since \(h(q,p_u)\) is regular,
    \(\mathcal{A}(q,p_u)[\Zmat{(n-k)\times k}; \Id{k}]\) is invertible.
    Taking \(e = \dot{e} = 0\), solving for \(p_a\), and setting 
    \(q_a = f(q_u,p_u)\) completes the proof.
\end{proof}



\textbf{Comparison with existing virtual constraint literature}: Horn \etal provide the constrained
dynamics for \vnhcs in \cite{nhvc_incline_walking}.
Their assumption \textbf{H2} is what we call regularity, and our requirement
that one can solve for \(q_a = f(q_u,p_u)\) on \(\Gamma\) implies their
assumption \textbf{H3} holds true.
Constraints of this form are used by Hamed and Ames in
\cite{nonholonomic_hybrid_zero_dynamics}.
In fact, one can show that the constrained dynamics \eqref{eqn:qpu-dynamics}
are exactly the constrained dynamics in
\cite[Eqn. (9)]{nonholonomic_hybrid_zero_dynamics}, which coincide with 
\cite[Eqn. (17)]{hybrid_zero_dynamics_bipedal_nhvcs}
when one chooses the special case \(\theta_1 = q_u\) and 
\(\theta_2 = p_u\).
The only real distinction between this section and the work of
\cite{nonholonomic_hybrid_zero_dynamics,hybrid_zero_dynamics_bipedal_nhvcs} is
that our constrained dynamics are derived in the Hamiltonian framework. 

\section{The Proposed Acrobot \vnhc}\label{sec:acrobot}

The goal in this article is to design a \vnhc which injects energy into the
acrobot by means of a giant-like motion.
Recall that the acrobot in Figure \ref{fig:acrobot} has dynamics given by
\eqref{eqn:acrobot-hamiltonian}, repeated here for convenience:
    \begin{equation*}
     \begin{cases}
        \dot{q} = \Minv(q) p 
        ,\\
        \dot{p}_u = -mgl\left(2s_u + s_{ua}\right) 
        ,\\
        \dot{p}_a =-\frac{1}{2}p\tpose \nabla_{q_a}\Minv(q) p
        - mgl s_{ua} + \tau.
    \end{cases}
\end{equation*}

Since the control input \(\tau\) only affects the actuated momentum,
the system above is already in simply actuated form.
Its state space is \(\Sone \times \Sone \times \Re \times \Re\).
We can therefore apply the theory from Section \ref{sec:vnhc} to design
a \vnhc of the form \(q_a = f(q_u,p_u)\) (i.e., a \vnhc 
\(h(q,p_u) = q_a - f(q_u,p_u) = 0\)).
Since we need the \vnhc to be regular, the following proposition will be
useful.
\begin{prop}\label{prop:acrobot-fpu-regular}
    Any relation \(q_a = f(p_u)\) 
    with \(f \in C^2\left(\R; \Sone\right)\) is a regular
    \vnhc of order 1 for the acrobot in \eqref{eqn:acrobot-hamiltonian}.
\end{prop}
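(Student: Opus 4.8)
The plan is to apply the characterization in Theorem~\ref{thm:vnhc-regularity} directly. Here the acrobot has \(n=2\) and degree of underactuation one, so \(k=1\); the unactuated coordinate is \(q_u\), the actuated one is \(q_a\), and the candidate constraint is \(h(q,p)=q_a-f(p_u)\), which is \(C^2\) because \(f\in C^2(\R;\Sone)\). First I would verify the standing requirements of Definition~\ref{defn:vnhc}: the row vector \([dh_q,\,dh_p]\) has entries \([\,0,\ 1,\ -f'(p_u),\ 0\,]\) (ordered as \(\partial/\partial q_u\), \(\partial/\partial q_a\), \(\partial/\partial p_u\), \(\partial/\partial p_a\)), and since the \(q_a\)-component equals \(1\) this has rank \(1=k\) at every point, regardless of whether \(f'(p_u)\) vanishes.

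Next, the first condition of Theorem~\ref{thm:vnhc-regularity}, namely \(dh_{p_a}=\Zmat{1\times1}\), holds trivially because \(h\) does not depend on \(p_a\). For the decoupling matrix, the key observation is that the acrobot's inertia matrix~\eqref{eqn:acrobot-inertia} depends on \(q\) only through \(c_a=\cos q_a\); in particular it does not depend on \(q_u\), so \(\nabla_{q_u}\Minv(q)=\Zmat{2\times2}\) and hence \(\mathcal{M}(q,p)=(\Id{1}\otimes p\tpose)\nabla_{q_u}\Minv(q)=0\). (Equivalently, the acrobot satisfies Assumption~\ref{assm:inertially-actuated}.) Since \(dh_q=[\,0,\ 1\,]\), we get \(\mathcal{A}(q,p_u)=dh_q\Minv(q)\), the second row of \(\Minv(q)\), and therefore
\[
  H(q,p)=\big(\mathcal{A}(q,p_u)-dh_{p_u}\mathcal{M}(q,p)\big)\,[\Zmat{1\times1};\Id{1}]=\mathcal{A}(q,p_u)\,[0;1]=\big[\Minv(q)\big]_{22},
\]
noting that for \(k=1\) the stacked column \([\Zmat{1\times1};\Id{1}]\) is just \(B=[0;1]\).

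It remains to check that \(\big[\Minv(q)\big]_{22}\) never vanishes on \(\Gamma\) (indeed, anywhere). Because \(M(q)\) is symmetric and positive definite for every \(q\), so is \(\Minv(q)\), and the diagonal entries of a positive-definite matrix are strictly positive; thus \(H(q,p)=\big[\Minv(q)\big]_{22}>0\) everywhere. If one prefers an explicit verification, \(\det M(q)=(ml^2)^2(2-c_a^2)>0\) and \(\big[\Minv(q)\big]_{22}=(3+2c_a)\big/\big(ml^2(2-c_a^2)\big)\ge 1/(ml^2)>0\). By Theorem~\ref{thm:vnhc-regularity}, \(q_a=f(p_u)\) is therefore a regular \vnhc of order \(1\); regularity in turn guarantees the existence of a feedback controller, e.g.~\eqref{eqn:stabilizing-controller}, rendering \(\Gamma\) invariant, which completes the verification of Definition~\ref{defn:vnhc}.

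There is no genuine obstacle in this proof. The only point worth isolating is that Assumption~\ref{assm:inertially-actuated} holds for the acrobot, which kills the term \(dh_{p_u}\mathcal{M}(q,p)\) and collapses the decoupling ``matrix'' to the single strictly positive scalar \(\big[\Minv(q)\big]_{22}\); everything else is bookkeeping with the definitions of \(\mathcal{A}\), \(\mathcal{M}\), and \(H\).
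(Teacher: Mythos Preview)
Your proof is correct and follows essentially the same approach as the paper: compute the decoupling scalar from Theorem~\ref{thm:vnhc-regularity} and verify it is nonzero everywhere. The paper writes out the decoupling matrix for the slightly more general constraint \(q_a=f(q_u,p_u)\), obtaining \(\big((1+c_a)\partial_{q_u}f+(3+2c_a)\big)/\big(ml^2(2-c_a^2)\big)\), and then sets \(\partial_{q_u}f=0\); your positive-definiteness observation that \(H(q,p)=\big[\Minv(q)\big]_{22}>0\) is a clean shortcut yielding the same conclusion without the explicit formula.
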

\begin{proof}
    The decoupling matrix \eqref{eqn:decoupling-matrix} for the acrobot
    evaluates to
    \(((1+c_a)\partial_{q_u}f(q_u,p_u) + (3+2c_a))/(ml^2(2-c_a^2))\).
    Since \(\partial_{q_u} f = 0\), this function is positive for all values
    of \(q_a\), and hence is full rank \(1\) everywhere on the constraint manifold.
\end{proof}

To design the \vnhc, we begin by examining a person on a
seated swing.
The person extends their legs when the swing moves forward, and retracts their
legs when the swing moves backward.
As the swing gains speed, the person leans their body while
extending their legs higher, thereby shortening the distance
from their centre of mass to the pivot and adding more energy to the swing
\cite{how_to_pump_a_swing}.

Now imagine the person's torso is affixed to the swing's rope so they are
always upright. 
Imagine further that the swing has no seat, allowing the person to extend
their legs beneath them. 
This position is identical to that of a gymnast on a bar.

The acrobot's legs are rigid rods which cannot retract, so we emulate the person
on a swing by pivoting the legs toward the direction of motion. 
Since a person lifts their legs higher at faster speeds, the acrobot's legs should
pivot to an angle proportional to the swing's speed.
A real gymnast cannot swing their legs in full circles, though they
are usually flexible enough to raise them parallel to the floor;
hence, the \vnhc must restrict the leg angle \(q_a\) 
to lie in \([-Q_a, Q_a]\) for some \(Q_a \in [\frac{\pi}{2}, \pi[\). 
Because the direction of motion is entirely determined by \(p_u\), 
one \vnhc which emulates this process is \(q_a = \bar{q}_a\arctan( I p_u)\),
displayed in Figure \ref{fig:qa-arctan}.
Here, \(\bar{q}_a \in \, ]0,2 Q_a/\pi]\) and \(I \in \R\) are 
design parameters.

This constraint does not perfectly recreate giant motion, during which
the gymnast's legs are almost completely extended \cite{usagym_giant}.
It instead pivots the legs partially during rotations.
However, the behaviour is similar enough that this constraint will
still inject energy into the acrobot.
The final \vnhc is
\begin{equation}\label{eqn:acrobot-constraint}
    h(q,p) = q_a - \bar{q}_a \arctan(I p_u)
    .
\end{equation}

\begin{figure}
    \centering
    \includegraphics[width=0.7\linewidth]{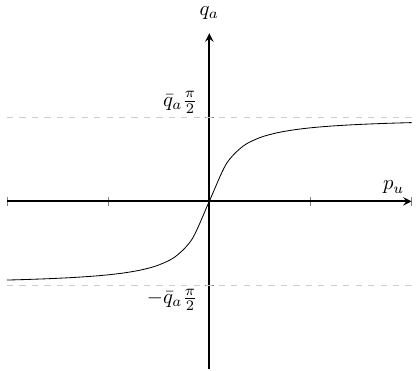}
    \caption{The acrobot constraint \(q_a = \bar{q}_a \arctan(I p_u)\).}
    \label{fig:qa-arctan}
\end{figure}

Recall that \((q_u, p_u)\) denote the angle and momentum of the acrobot's torso.
By Theorem \ref{thm:zero-dynamics}, the constrained dynamics arising from the
\vnhc \eqref{eqn:acrobot-constraint} are parameterized fully by 
\((q_u,p_u) \in \SxR\).
Here, \eqref{eqn:g-qpu} reduces to
\begin{multline*}
    g(q_u,p_u) = \frac{
    (1+c_a)p_u}{ml^2(3+2c_a)}
    - \frac{mgl\bar{q}_a I (2-c_a^2)(2s_u + s_{ua})
    }{(3+2c_a)(1+I^2 p_u^2)}
    ,
\end{multline*}
and the constrained dynamics \eqref{eqn:qpu-dynamics} are
\begin{equation}\label{eqn:acrobot-constrained-dynamics}
    \begin{cases}
    \dot{q}_u &= \frac{(1+I^2 p_u^2)p_u + m^2gl^3\bar{q}_a I(2s_u + s_{ua})(1+c_a) }
            {ml^2(1+I^2 p_u^2)(3+2c_a)}
    ,    \\
    \dot{p}_u &= - m g l (2s_u + s_{ua})
    ,
    \end{cases}
\end{equation}
subject to \(q_a = \bar{q}_a \arctan(I p_u)\). In what follows, we denote by $x=(q_u,p_u)$ the state of the constrained dynamics~\eqref{eqn:acrobot-constrained-dynamics}, and by $x(t,x_0)$ the solution of~\eqref{eqn:acrobot-constrained-dynamics} at time $t$ with initial condition $x(0)=x_0$. We let $x([a,b],x_0)$ denote the set $\{x(t,x_0): t \in [a,b]\}$.

\subsection{Energy Injection and Dissipation with the Proposed \vnhc}\label{sec:energy-inject-diss}
Suppose for a moment that \(I = 0\) in \eqref{eqn:acrobot-constraint}, i.e.,
that the legs stay fully extended.
The acrobot becomes a nominal pendulum with two masses, whose total mechanical
energy is
\begin{equation}\label{eqn:nominal-energy}
    E(q_u,p_u) \eqdef \frac{p_u^2}{10ml^2} + 3mgl(1 - \cos(q_u))
    .
\end{equation}
The function $E(q_u,p_u)$ is a first integral for the constrained
dynamics~\eqref{eqn:acrobot-constrained-dynamics}, i.e., each solution of the
constrained dynamics is confined to a level set of $E$, and each orbit not
containing equilibria is a connected component of a level set of $E$. Given
scalars $R$ and $R_1<R_2$, and letting $\# \in \{$
``$<$''$,$``$\leq$''$,$``$>$''$,$``$\geq$''$\}$, we define the following sets
(see Figure~\ref{fig:domains}):
\begin{equation}\label{eq:E_sets}
\begin{aligned}
E_R &\eqdef \{(q_u,p_u): E(q_u,p_u) = R\}, \\
E_{\# R} &\eqdef \{(q_u,p_u): E(q_u,p_u) \#R\}, \\
E_{[R_1,R_2]} &\eqdef \{(q_u,p_u): R_1 \leq E(q_u,p_u) \leq R_2\}.
\end{aligned}
\end{equation}
\begin{figure}[htb]
    \centering
    \begin{subfigure}[t]{0.45\linewidth}
        \includegraphics[width=\linewidth]{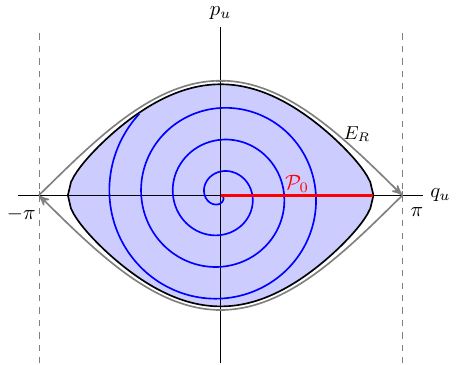}
        \caption{The set $E_{\leq R}$ with $0 < R < \bar R$ and an oscillation gaining energy.} 
        \label{fig:acrobot-oscillation-domain}
    \end{subfigure}
    \hfill
    \begin{subfigure}[t]{0.45\linewidth}
        \includegraphics[width=\linewidth]{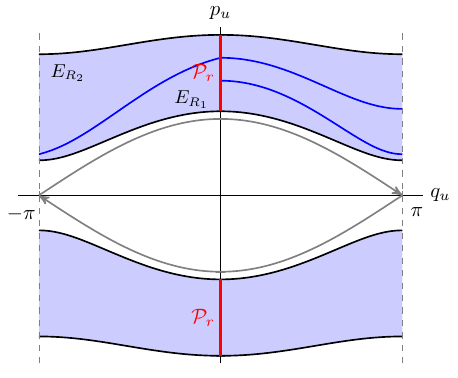}
        \caption{The set $E_{[R_1,R_2]}$, with $\bar R < R_1 < R_2$, and a rotation gaining energy.}
            \label{fig:acrobot-o2}
    \end{subfigure}
    \caption{Oscillations and rotations gaining energy. 
}
\label{fig:domains}
\end{figure}

Letting $\bar R \eqdef E(\pi,0)$, solutions initialized in the interior of the
set $E_{\leq \bar R}$ correspond to {\em oscillations} of the pendulum, i.e.,
rocking motions where the pendulum does not perform full revolutions. On the
other hand, for each $R>\bar R$, solutions initialized in the interior of the
set $E_{[\bar R,R]}$ correspond to {\em rotations} of the pendulum, i.e., full
revolutions around the pivot point. 

Now consider the case $I\neq 0$, i.e., when the acrobot is not fully extended.
The constrained dynamics~\eqref{eqn:acrobot-constrained-dynamics} are no longer
Hamiltonian, and their solutions are no longer confined to level sets of $E$. In
what follows, we will show that the orbits of the constrained dynamics roughly
speaking gain energy or lose energy depending on the sign of the design
parameter $I$. To be more precise in describing this energy injection or
dissipation property, we next define what is an oscillation or a rotation
gaining or losing energy. Intuitively, oscillations gaining energy correspond to
rocking motions of the torso with increasing amplitude, while rotations gaining
energy correspond to full revolutions with increasing speed.
Figure~\ref{fig:domains} illustrates these concepts.

We define two Poincar\'e sections,
\[
\begin{aligned}
\cP_o & \eqdef \{(q_u,p_u) \in E_{<\bar R}: q_u >0, p_u=0\} \\
\cP_r & \eqdef \{(q_u,p_u) \in E_{> \bar R}: q_u=0\}.
\end{aligned}
\]
Consider a solution $x(t,x_0)$ of the constrained dynamics~\eqref{eqn:acrobot-constrained-dynamics}. We say that $x(t,x_0)$ is an {\em oscillation over $[0,\bar t]$} if $x([0,\bar t],x_0) \cap \cP_o$ is a discrete set, i.e., a set of isolated points. Similarly, $x(t,x_0)$ is a {\em rotation over $[0,\bar t]$} if $x([0,\bar t],x_0) \cap \cP_r$ is a discrete set. The isolated points mentioned above represent consecutive crossings of the Poincar\'e sections $\cP_o$ and $\cP_r$ by the solution $x(t,x_0)$ at times $t^i_o$, $t^i_r$, where $t^i_{o,r} < t^{i+1}_{o,r}$. 

Letting $x^i_o(x_0) \eqdef x(t^i_o,x_0)$ and $x^i_r(x_0) \eqdef x(t^i_r,x_0)$, we say that {\em an oscillation over $[0,\bar t]$ gains energy (respectively, loses energy)} if the sequence $\{\|x^i_o(x_0)\|\}$ is monotonically increasing (respectively, monotonically decreasing) with $i$. 
Similarly, {\em a rotation over $[0,\bar t]$ gains energy (respectively, loses energy)} if the sequence $\{\|x^i_r(x_0)\|\}$ is monotonically increasing (respectively, monotonically decreasing) with $i$.

The next theorem states that if the design parameter $I$ in the \vnhc~\eqref{eqn:acrobot-constraint} is chosen small enough, then the constrained dynamics have oscillations gaining or losing energy, depending on the sign of $I$.

\begin{thm}[oscillations gaining/losing energy]\label{thm:acrobot-oscillations}
 Consider the constrained dynamics in~\eqref{eqn:acrobot-constrained-dynamics}. For each $R_1<R_2 \in \mathopen] 0, \bar R \mathclose[$, there exists \(I^\star > 0\) such that for each \(I \in \, ]0,I^\star]\) (respectively, $I \in [-I^\star,0\mathclose[$) and $x_0 \in E_{[R_1,R_2]}$, there exists $\bar t\geq 0$ such that $x(\bar t,x_0) \in E_{R_2}$ (respectively, $x(\bar t, x_0) \in E_{R_1}$), and $x(t,x_0)$ is an oscillation gaining energy (respectively, losing energy) over $[0, \bar t]$. 
\end{thm}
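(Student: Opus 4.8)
The plan is to treat the constrained dynamics~\eqref{eqn:acrobot-constrained-dynamics} as a perturbation of the Hamiltonian pendulum dynamics obtained when $I=0$, using the nominal energy $E(q_u,p_u)$ from~\eqref{eqn:nominal-energy} as a Lyapunov-type function whose evolution along solutions is controlled by the sign of $I$. The natural first step is to compute $\dot E$ along~\eqref{eqn:acrobot-constrained-dynamics}. Since $E$ is a first integral when $I=0$, the derivative $\dot E$ must vanish identically at $I=0$, so $\dot E = I\,\Phi(q_u,p_u,I)$ for some smooth function $\Phi$, and one should check that $\Phi(q_u,p_u,0)$ has a definite sign (or at least an appropriate average sign) on the oscillation region $E_{<\bar R}$. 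I expect $\dot E$ to be, to leading order in $I$, proportional to $I$ times a quantity of the form $(2s_u+s_{ua})^2$ or $s_u^2$ times a positive factor — nonnegative, and strictly positive away from the configuration where the relevant sine terms vanish. This already suggests $E$ is nondecreasing (for $I>0$) but not strictly monotone along the flow, which is exactly why the theorem is phrased in terms of the Poincaré section $\cP_o$ rather than pointwise monotonicity of $E$.

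The second step is to set up the Poincaré return map on $\cP_o = \{q_u>0,\ p_u=0,\ E<\bar R\}$. For $I=0$, every orbit in the interior of $E_{\leq\bar R}$ is a closed oscillation crossing $\cP_o$ once per period at a point determined by its energy level, and the return map is the identity on the energy coordinate. For small $I\neq 0$, by smooth dependence on parameters and compactness of $E_{[R_1,R_2]}$, solutions starting in $E_{[R_1,R_2]}$ remain oscillatory (they keep crossing $\cP_o$) on any fixed finite time interval, and the energy gain over one return, $\Delta E_i \eqdef E(x^{i+1}_o) - E(x^i_o) = \int_{t^i_o}^{t^{i+1}_o} \dot E\,dt$, satisfies $\Delta E_i = I\,c_i + O(I^2)$ where $c_i = \oint \Phi(\cdot,0)\,dt$ is the integral of the leading-order term around the unperturbed closed orbit through $x^i_o$. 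The key quantitative claim is that $c_i$ is bounded below by a positive constant uniformly over all energy levels in $[R_1,R_2]$ — this follows because the unperturbed orbits through $E_{[R_1,R_2]}$ form a compact family of closed curves, none of them degenerate, and $\Phi(\cdot,0)\geq 0$ with strict positivity on a set each orbit must traverse. Hence there is $I^\star>0$ so that for $0<I\leq I^\star$ the per-return energy gain is uniformly positive on $E_{[R_1,R_2]}$, so $\{E(x^i_o)\}$ (equivalently $\{\|x^i_o\|\}$, since on $\cP_o$ the norm is a monotone function of $E$) strictly increases, and by the uniform lower bound on $\Delta E_i$ it exits $E_{[R_1,R_2]}$ through the level $E_{R_2}$ in finitely many returns; one then picks $\bar t$ to be the first time $E$ reaches $R_2$. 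The case $I<0$ is identical with signs reversed, giving exit through $E_{R_1}$.

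A couple of technical points need care. First, one must ensure the solution does not leave the oscillation region $E_{<\bar R}$ sideways or approach the separatrix $E=\bar R$ before the argument completes: since we only track the solution until $E$ first hits $R_2<\bar R$, and $E$ increases by controlled increments, staying below $\bar R$ is automatic as long as $R_2<\bar R$ and $I$ is small enough that a single return cannot overshoot past $\bar R$ — this is another constraint defining $I^\star$. Second, one must confirm that crossings of $\cP_o$ are transversal (so $\cP_o$ is a genuine Poincaré section and the crossing times $t^i_o$ are well-defined and isolated): at $p_u=0$ with $q_u>0$ one checks $\dot p_u = -mgl(2s_u+s_{ua})\neq 0$, which holds except at isolated configurations; near those the return-map argument must be done slightly more carefully, or one restricts attention to the complement and uses that $E_{[R_1,R_2]}$ with $R_1>0$ stays away from the origin. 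The main obstacle I anticipate is precisely establishing the \emph{uniform} positive lower bound on the leading-order per-return energy gain $c_i$ over the whole compact family of oscillation levels in $[R_1,R_2]$ — equivalently, ruling out that the averaged perturbation effect degenerates on some orbit — together with controlling the $O(I^2)$ remainder uniformly so that it is dominated by $I c_i$ for all $I\in\,]0,I^\star]$; this is where the explicit structure of $\dot E$ computed in the first step, and compactness, have to be combined quantitatively.
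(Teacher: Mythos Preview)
Your overall architecture---treat $I$ as a perturbation parameter, expand the Poincar\'e return map to first order in $I$, and show the leading-order energy increment per return is uniformly positive on $E_{[R_1,R_2]}$---is precisely the strategy the paper uses (packaged as Lemma~\ref{lem:gain_lose}, applied after an action-angle--type change of coordinates $(q_u,p_u)\mapsto(r,\theta)$ given in~\eqref{eq:T1}). The gap is in the mechanism you propose for establishing positivity of $c_i=\oint\Phi(\cdot,0)\,dt$.

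A direct computation of $\dot E$ along~\eqref{eqn:acrobot-constrained-dynamics} gives
\[
\Phi(q_u,p_u,0)=\partial_I\dot E\big|_{I=0}=\frac{g\bar q_a}{5l}\Big(18\,m^2gl^3\,s_u^2 - c_u\,p_u^2\Big),
\]
which is \emph{not} nonnegative: at $q_u=0$ it equals $-\tfrac{g\bar q_a}{5l}p_u^2<0$, and every nontrivial oscillation orbit in $E_{<\bar R}$ crosses $q_u=0$ with $p_u\neq0$. So $\Phi(\cdot,0)$ changes sign along each orbit, and the argument ``$\Phi(\cdot,0)\geq 0$ with strict positivity on a set each orbit must traverse, hence $c_i>0$'' does not go through. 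You correctly flagged the uniform lower bound on $c_i$ as the main obstacle; the point is that it cannot be obtained from pointwise sign information.

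The paper does not resolve this analytically either. After passing to $(r,\theta)$ coordinates, the analogue of your $c_i$ is $L\int_0^{2\pi}a(r,\theta)\,d\theta$ with an explicit integrand $a$ (also sign-changing); positivity of this integral for all $r\in\mathopen]0,\pi\mathclose[$ is established by numerical evaluation (Figure~\ref{fig:acrobot-Q}). So both routes terminate in the same one-variable sign check; the paper's coordinate transformation has the advantage that the function to be checked is adimensional and does not depend on $m,g,l,\bar q_a$, so a single plot suffices for all parameter values, but it does not provide a closed-form proof of positivity.
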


\begin{proof}
 See Section~\ref{sec:proof}.
\end{proof}

The next result concerns rotations.  In preparation for the theorem statement, we define 
    \[
        b(r,\theta) \eqdef 
        \frac{5C \left(
        \frac{C}{\bar{q}_a}\left(18s_\theta^2 + 30c_\theta(1 - c_\theta)\right)
            - c_\theta r^2
        \right)}{
        | r|\sqrt{ r^2 - 30m^2gl^3(1 - c_\theta)}
        },
\]
with \(C = m^2gl^3\bar{q}_a\), and 
        \(S( r) \eqdef \int \limits_{0}^{2\pi} b(r,\theta)d\theta\).

\begin{thm}[rotations gaining/losing energy]\label{thm:acrobot-rotations}
    Consider the constrained dynamics
    in~\eqref{eqn:acrobot-constrained-dynamics}, and fix $R_1, R_2$ such that
    \(\bar R<R_1<R_2\). Suppose there exists \(\varepsilon > 0\) such that \(S(
    r) \geq \varepsilon\) for all \( r \in \, [(10 m l^2 R_1)^{1/2}, (10 m l^2
    R_2)^{1/2}]\). Then there exists \(I^{\star}>0\) such that, for each \(I \in
    \, ]0,I^{\star}]\) (respectively, $I \in [-I^\star,0\mathclose[$) and $x_0
    \in E_{[R_1,R_2]}$, there exists $\bar t \geq 0$ such that $x(\bar t,x_0)
    \in E_{R_2}$  (respectively, $x(\bar t, x_0) \in E_{R_1}$), and $x(t,x_0)$
    is a rotation gaining energy (respectively, losing energy) over $[0, \bar
    t]$. 
\end{thm}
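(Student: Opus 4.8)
The plan is to mimic the structure of the proof of Theorem~\ref{thm:acrobot-oscillations} but with rotations in place of oscillations, using the energy $E$ as a Lyapunov-like function and exploiting the fact that for small $I$ the constrained dynamics are a small perturbation of the Hamiltonian pendulum. First I would compute $\dot E$ along the constrained dynamics~\eqref{eqn:acrobot-constrained-dynamics}. Since $E$ is a first integral when $I=0$, the derivative $\dot E$ vanishes at $I=0$ and hence factors as $\dot E = I\, \Phi(q_u,p_u,I)$ for some smooth $\Phi$; the sign of $\dot E$ for small $|I|$ is therefore governed by the sign of $I$ times the sign of $\Phi(q_u,p_u,0)$. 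The key point will be that $\Phi$ is not sign-definite pointwise, so instead of asking for instantaneous energy increase one must show a net increase over each full revolution of the torso. This is exactly what the hypothesis on $S(r)$ encodes.

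Next I would parametrize a rotation by the torso angle $q_u$ rather than by time. On the region $E_{>\bar R}$ we have $\dot q_u \neq 0$ (the torso genuinely rotates), so along a rotation we may write $dE/dq_u = \dot E/\dot q_u$. Setting $I=0$ in the constrained dynamics fixes $p_u$ as a function of $q_u$ and the energy level through $p_u^2 = 10ml^2\bigl(R - 3mgl(1-\cos q_u)\bigr)$, i.e. $p_u = r\sqrt{\,\cdot\,}$-type expressions with $r=(10ml^2 R)^{1/2}$; substituting this into $\dot E/\dot q_u$ at $I=0$ and integrating over one revolution $q_u\in[0,2\pi]$ is precisely the integral $S(r)$ defined before the theorem, up to the leading factor of $I$. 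Thus the net energy change over one revolution starting from energy level $E=R$ is $I\, S(r) + O(I^2)$, uniformly for $r$ in the compact interval $[(10ml^2R_1)^{1/2},(10ml^2R_2)^{1/2}]$, where the $O(I^2)$ error is controlled by a compactness argument on $E_{[R_1,R_2]}$ together with continuous dependence of solutions on $I$ (the Poincar\'e return map to $\cP_r$ depends smoothly on $I$ and on the initial point). The hypothesis $S(r)\geq \varepsilon>0$ then guarantees that, for $I\in\,]0,I^\star]$ with $I^\star$ small enough, each revolution increases $E$ by at least $I\varepsilon/2>0$; for $I<0$ it decreases by at least $|I|\varepsilon/2$.

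Finally I would assemble these per-revolution estimates into the statement. For $I>0$ and $x_0\in E_{[R_1,R_2]}$: the solution stays in $E_{>\bar R}$ as long as its energy stays above $\bar R$, hence it genuinely rotates and crosses $\cP_r$ at a discrete increasing sequence of times $t^i_r$; the energies $E(x^i_r(x_0))$ form a sequence increasing by at least $I\varepsilon/2$ each step (as long as the energy remains $\leq R_2$), so after finitely many crossings the energy reaches $R_2$, i.e. there is $\bar t\geq 0$ with $x(\bar t,x_0)\in E_{R_2}$. Because $E$ is increasing along the rotation, one checks that $\|x^i_r(x_0)\|$ is increasing along the way: on $\cP_r$ one has $q_u=0$, so $E(x^i_r)=p_u^2/(10ml^2)$ and $\|x^i_r\|=|p_u|$, making $E$ and $\|\cdot\|$ strictly monotonically related on $\cP_r$; hence $x(t,x_0)$ is a rotation gaining energy over $[0,\bar t]$. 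The case $I<0$ is symmetric with $R_2$ replaced by $R_1$ and "gaining" by "losing". The main obstacle I anticipate is the uniformity of the $O(I^2)$ remainder: one must rule out that, for some sequence $I_n\to 0$, the correction accumulated over a revolution is not $o(I_n)$ uniformly over all starting points and all energy levels in $[R_1,R_2]$ — this requires a careful compactness/continuity argument for the family of return maps, analogous to the one used for oscillations in Section~\ref{sec:proof}, plus the observation that revolution times are bounded above and below on $E_{[R_1,R_2]}$ so that the number of revolutions needed is finite and uniformly bounded.
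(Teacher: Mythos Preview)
Your proposal is correct and is essentially the same argument as the paper's: the paper packages the Poincar\'e-map/perturbation estimate into a reusable Lemma~\ref{lem:gain_lose} and applies it via the explicit change of coordinates $(q_u,p_u)\mapsto(r,\theta)=\big(\sqrt{10ml^2E},\,\pm q_u\big)$ on each connected component of $E_{>\bar R}$, whereas you work directly with $E$ and parametrize by $q_u$; since $r$ is a monotone function of $E$ and $\theta=\pm q_u$, the two computations coincide and your integral is (up to the positive factor $5ml^2/r$) exactly the paper's $S(r)$. The only point you leave implicit that the paper makes explicit is the need to treat the two components $\{p_u>0\}$ and $\{p_u<0\}$ separately (your reparametrization by $q_u$ requires a fixed sign of $\dot q_u$), but this is a cosmetic matter.
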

\begin{proof}
See Section~\ref{sec:proof}.
\end{proof}

\begin{rem}
Theorem~\ref{thm:acrobot-oscillations} concerning oscillations
gai\-ning/lo\-sing energy states that for any set of physical parameters, the
acrobot constrained by
\eqref{eqn:acrobot-constraint} will gain enough energy to exit (in finite time) a
level set of the function $E$ arbitrarily close to the boundary $E_{\bar R}$
separating oscillations from rotations, provided the parameter $I$
in~\eqref{eqn:acrobot-constraint} is chosen small enough. The result is
semiglobal relative to the set $E_{<\bar R}$ in that energy gain can be achieved
on any compact subset $E_{[R_1,R_2]}$ of $E_{<\bar R}$ by a suitably small $I$.
Vice versa, for small {\em negative} $I$, the oscillations will lose energy on any compact subset $E_{[R_1,R_2]}$ of $E_{<\bar R}$.

Theorem~\ref{thm:acrobot-rotations} concerning rotations relies on an assumption on the function $S(r)$ which depends on the physical parameters of the acrobot, as well as the design parameter $\bar q_a$. The result is otherwise analogous to Theorem~\ref{thm:acrobot-oscillations}.
\end{rem}

\begin{rem}\label{rem:thm:caveat}
The theorems above state that under certain conditions, if the design parameter $I$ is chosen small enough then the acrobot will exhibit oscillations and rotations gaining energy. The oscillations will exit a compact subset of $E_{<\bar R}$ in finite time, while the rotations will exit a compact subset of $E_{>\bar R}$. Will the oscillations gaining energy eventually turn into rotations gaining energy? The theorem does not answer this question, but extensive simulations and physical experiments presented below suggest that the answer is yes. 
\end{rem}
%
%
%
%


\subsection{Energy Regulation}\label{sec:energy-reg}
One can apply the results of Theorems~\ref{thm:acrobot-oscillations} and~\ref{thm:acrobot-rotations}
towards energy regulation; 
that is, one can stabilize oscillations or rotations by appropriately toggling
between injection and dissipation \vnhcs, which can be achieved by changing the
sign of \(I\) in \eqref{eqn:acrobot-constraint}.

\textbf{Rotation Regulation}: choose a desired rotation rate 
\(p_\text{des} > 0\) and a hysteresis value \(\delta \in [0,1]\) such that \((1-\delta) p_\text{des} > \sqrt{60m^2gl^3}\). Each time the orbit crosses the \(p_u\)-axis (i.e. when \(q_u = 0\)), the
supervisor changes which \vnhc is enforced as follows:
\begin{itemize}
    \item If \(|p_u| < (1-\delta)p_\text{des}\), enable the injection \vnhc by
        setting \(I > 0\) in \eqref{eqn:acrobot-constraint}.
    \item If \(|p_u| > (1+\delta)p_\text{des}\), enable the dissipation \vnhc by
        replacing \(I > 0\) with \(-I < 0\) in \eqref{eqn:acrobot-constraint}.
    \item If \((1-\delta)p_\text{des} \leq |p_u| \leq (1+\delta)p_\text{des}\),
        extend the leg fully by setting \(q_a = 0\).
        In simulation we assume this can be done instantaneously,
        though in practice this takes time.
\end{itemize}

All orbits of the acrobot initialized in $E_{< \bar R}$ must cross the \(p_u\) axis. The procedure above exploits this observation to stabilize a rotation even if the acrobot is initialized in $E_{< \bar R}$, although in this case we do not offer a theoretical guarantee (see Remark~\ref{rem:thm:caveat}). While the above supervisor is designed to regulate a rotation rate, it
does not impose the rotation direction.


\textbf{Oscillation Regulation}: Choose a desired oscillation angle  \(q_\text{des} \in \, ]0,\pi[\) and, to avoid infinite switching, a hysteresis value \(\delta \in [0,\pi/q_\text{des} - 1]\). An orbit in the \((q_u,p_u)\)-plane will either cross the \(q_u\) axis if the orbit corresponds to a rocking motion, or it will cross the line \(|q_u| = \pi\) if the orbit corresponds to a full revolution. When either of these occur, the supervisor does the following: 
\begin{itemize}
    \item If \(|q_u| < (1-\delta)q_\text{des}\), enable the injection \vnhc.
    \item If \(|q_u| > (1+\delta)q_\text{des}\), enable the dissipation \vnhc.
    \item If \((1-\delta)q_\text{des} \leq |q_u| \leq (1+\delta)q_\text{des}\),
        keep the leg extended at \(q_a = 0\). This can be done continuously since
        \(q_a = 0\) when \(p_u = 0\).
\end{itemize}
Note that by the choice of \(\delta\), if the supervisor kicks in when  \(|q_u| = \pi\) (i.e., when the robot is rotating) then the supervisor will automatically enable the dissipation \vnhc.

\section{Simulation Results}\label{sec:simulations}
In Section \ref{sec:experiments} we will test the \vnhc~\eqref{eqn:acrobot-constraint} on a physical acrobot. While the acrobot model used for the theoretical development assumes point-masses and links of equal lengths, the physical acrobot has distributed mass and unequal lengths. In preparation for the experiments, in this section we simulate the proposed \vnhc controller on a more accurate acrobot model, depicted in Figure~\ref{fig:acrobot-model}, with distributed mass and unequal link lengths. Let
\begin{align*}
    m_{11}(q) \eqdef &m_a l_u^2 + 2m_al_ul_{c_a}\cos(q_a) + m_al_{c_a}^2 +
    m_ul_{c_u}^2 \\
                 &+ J_u + J_a
                 , \\
    m_{12}(q) \eqdef &m_al_{c_a}^2 + m_al_ul_{c_a}\cos(q_a) + J_a
    , \\
    m_{22}(q) \eqdef &m_al_{c_a}^2 + J_a,
\end{align*}
where \(J_u\) and \(J_a\) are the moments of inertia of the torso and leg links
respectively, and all other parameters are illustrated in Figure~\ref{fig:acrobot-model}. The distributed mass acrobot has inertia matrix
\begin{equation*}
    M(q) = \begin{bmatrix}
        m_{11}(q) & m_{12}(q) \\
        m_{12}(q) & m_{22}(q)
    \end{bmatrix}
    ,
\end{equation*}
and potential function
\begin{equation*}
    V(q) = g\big(m_al_{c_a}(1-c_{ua})
    + (m_al_u + m_ul_{c_u})(1-c_u)\big)
    .
\end{equation*}

\begin{figure}
    \centering
    \includegraphics[width=0.8\linewidth]{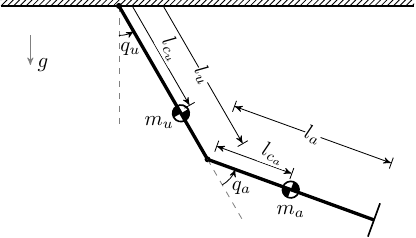}
    \caption{The distributed mass acrobot model, represented by two weighted rods
    differing in both length and mass.}%
    \label{fig:acrobot-model}
\end{figure}

Theorems~\ref{thm:acrobot-oscillations} and~\ref{thm:acrobot-rotations} make use of the mechanical energy \eqref{eqn:nominal-energy} of the nominal pendulum obtained by setting \(I = 0\) in \eqref{eqn:acrobot-constraint}. Using the parameters in Table~\ref{tab:acrobot-parameters} and the inertia matrix and potential function given above, we get the mechanical energy of the nominal pendulum with $I=0$:
\[
    E(q_u,p_u) \approx 396.5501 p_u^2 + 0.5997(1 - \cos(q_u))
    .
\]
This is the function we use in interpreting the simulation results presented next.

We will now simulate the effects of constraining the physical acrobot with the
\vnhc \eqref{eqn:acrobot-constraint}, thereby demonstrating that \vnhcs are
robust to model mismatch. According to Theorems~\ref{thm:acrobot-oscillations}
and~\ref{thm:acrobot-rotations}, the control parameter \(I\) must be ``small"
for the \vnhc~\eqref{eqn:acrobot-constraint} to inject (or dissipate) energy
into the simplified acrobot. The theorems do not specify how small \(|I|\) must
be; while we could make it arbitrarily small in simulations, we will eventually
implement this \vnhc on a physical testbed where \(|I|\) must be large enough to
overcome friction. 

Setting \(\bar{q}_a = 1\), we experimentally determined that \(|I| = 10\) is a viable control parameter, so this is the value we will use for all simulations and experiments. In other words, the \textit{injection \vnhc} is \eqref{eqn:acrobot-constraint} with  \(I = 10\) while the \textit{dissipation \vnhc} is \eqref{eqn:acrobot-constraint} with  \(I = -10\).

\begin{table}
    \centering
    \caption{Physical parameters for the real acrobot.}
    \label{tab:acrobot-parameters}
    \resizebox{\columnwidth}{!}{%
    \begin{tabular}{ccccccccc}
        \toprule
        $m_u$ & $m_a$ & $l_u$ & $l_a$ & $l_{c_u}$ & $l_{c_a}$ & $J_u$ & $J_a$ & $g$ \\
        (kg) & (kg) & (m) & (m) & (m) & (m) & (kg\(\cdot\)m\(^2\)) &
        (kg\(\cdot\)m\(^2\)) & (m/s\(^2\)) \\
        \midrule
        0.2112 & 0.1979 & 
        0.148 & 0.145 & 
        0.073 & 0.083 & 
        0.00129 & 0.00075 & 
        9.81 \\ 
        \bottomrule
    \end{tabular}
    } 
\end{table}

\subsection{Energy Injection}

In simulation, we stabilized the injection \vnhc for the acrobot using the
controller \eqref{eqn:stabilizing-controller}.
We initialized the acrobot on the constraint manifold
with initial condition \((q_u,p_u) = \left(\pi/32,0 \right)\) and simulated the
constrained system for \(30\) seconds.
The resulting orbit is plotted in Figure
\ref{fig:acrobot-in-orbit}.

The level set \(E_{\bar R}\), with \(\bar R=E(\pi,0)\), is outlined in black.
Recall that this level set is the boundary between oscillations and rotations of the
nominal pendulum.
The points where the orbit exits \(E_{\bar R}\) are marked with black asterisks,
with the final departure marked by a red asterisk.
Interestingly, the choice of \(I\) is large enough that we observe significant
differences between the nominal pendulum and the constrained dynamics:
\(E_{\bar R}\) intersects the \(p_u\)-axis at \(\abs{p_u} \approx 0.17\), yet the
constrained acrobot begins rotating once it hits the
\(p_u\)-axis at \(\abs{p_u} \approx 0.16\). 
This indicates that higher values of \(I\) enable the acrobot to gain energy
faster and begin rotating sooner, so long as the actuator does not saturate.

\begin{figure}[]
    \centering
    \includegraphics[width=0.8\linewidth]{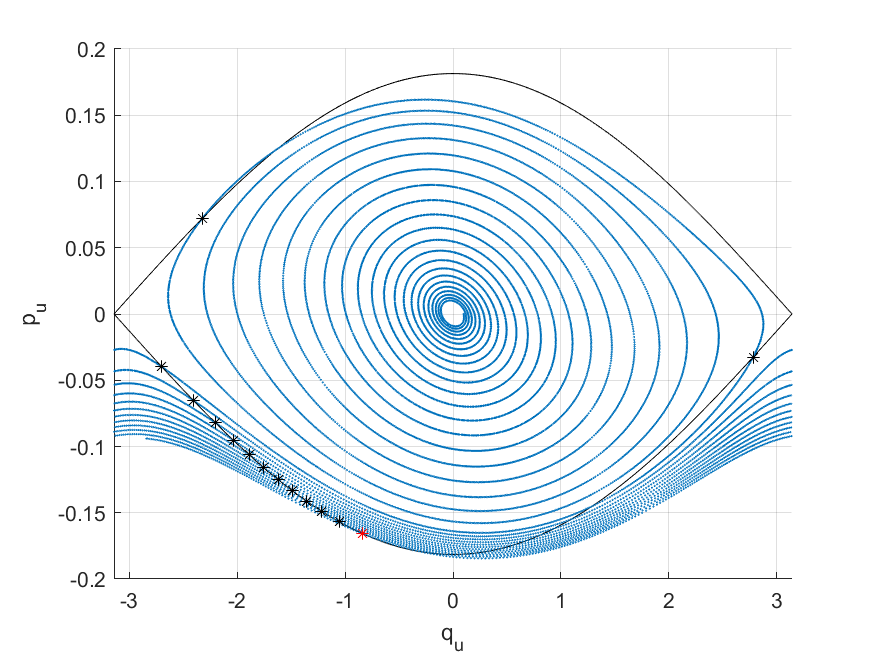}
    \caption{A simulation of the acrobot gaining energy.}
    \label{fig:acrobot-in-orbit}
\end{figure}

To verify numerically that the acrobot would consistently achieve rotations, we
ran a Monte-Carlo \cite{montecarlo} simulation where we initialized the acrobot
randomly inside the sublevel set
\[
    \left\{(q_u,p_u) \in \SxR \mid
    E(q_u,p_u) \leq E\left(\frac{\pi}{32},0\right)\right\}
    ,
\] 
and measured how long it took to begin rotating.
The results in Figure \ref{fig:acrobot-mc} show that
the acrobot always rotated within 20--40 seconds.

\begin{figure}[]
    \centering
    \includegraphics[width=0.8\linewidth]{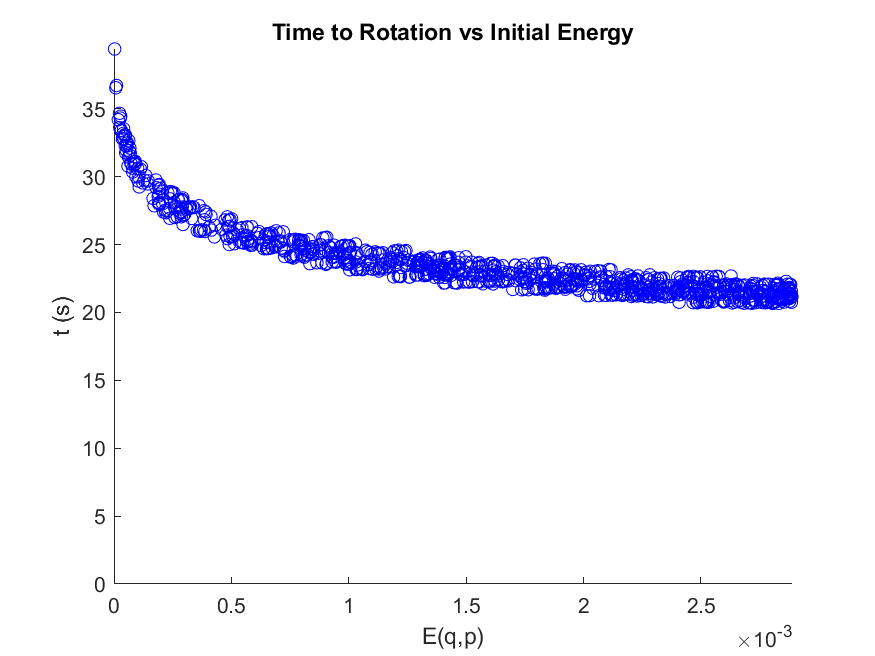}
    \caption{Monte Carlo simulation for energy injection.}
    \label{fig:acrobot-mc}
\end{figure}

\subsection{Energy Dissipation}

In simulation, we stabilized the dissipation \vnhc and initialized the acrobot on
the constraint manifold with a rotation \((q_u,p_u) = (0,0.18)\).
We simulated the constrained system for \(30\) seconds and plotted the
resulting orbit in Figure \ref{fig:acrobot-diss-orbit}. 
As expected, the acrobot slows down over time.
We highlight the locations where the orbit crossed the set \(E_{\bar R}\) by
black asterisks, with the final crossing in red.
After this final crossing, the acrobot ceased rotating and its oscillations
decayed to zero.

\begin{figure}
    \centering
    \includegraphics[width=0.8\linewidth]{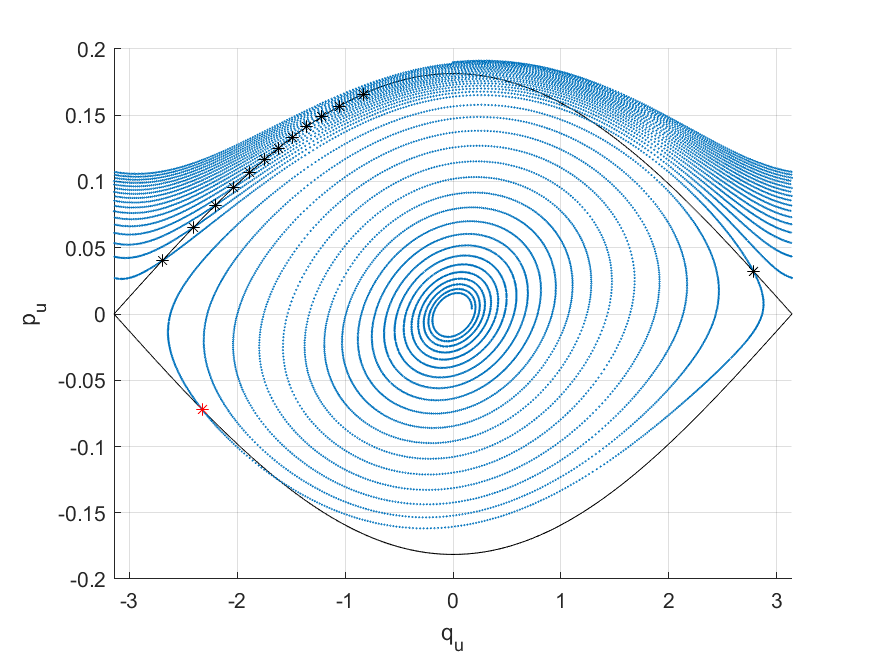}
    \caption{A simulation of the acrobot dissipating energy.}
    \label{fig:acrobot-diss-orbit}
\end{figure}

\subsection{Oscillation Regulation}

Recall from Section \ref{sec:energy-reg} that one can use a supervisor to stabilize
oscillations by appropriately toggling between injection and dissipation \vnhcs
whenever the orbit of the acrobot crosses the \(q_u\)-axis.
Figure \ref{fig:acrobot-osc-reg} shows the supervisor stabilizing an
oscillation with body angle \(q_\text{des} = \pi/2\) and a 
\(5\%\) hysteresis, meaning \(\delta = 0.05\).
The supervisor reevaluated its choice of \vnhc at each black asterisk;  
the red contour corresponds to the part of the orbit where the supervisor kept
the leg extended, because the oscillation was within tolerance of
\(q_\text{des}\).
The solid black line is the desired oscillation, and the dashed black
lines show the hysteresis around that orbit.

In Figure \ref{fig:acrobot-osc-reg-in} the acrobot was initialized at 
\((q_u,p_u) = (\pi/32,0)\); here the supervisor injected energy to stabilize the
desired orbit.
In Figure \ref{fig:acrobot-osc-reg-diss} the acrobot was initialized at the rotation
\((q_u,p_u) = (0,0.19)\); note that the supervisor is first enabled at the line
\(|q_u| = \pi\) and dissipates energy. Eventually the acrobot begins a rocking
motion and the supervisor is enabled again each time the acrobot hits the
\(q_u\)-axis. 
The supervisor continues to dissipate energy until it stabilizes the desired
oscillation.

\begin{figure}
    \centering
    \begin{subfigure}[t]{0.49\linewidth}
        \includegraphics[width=\linewidth]{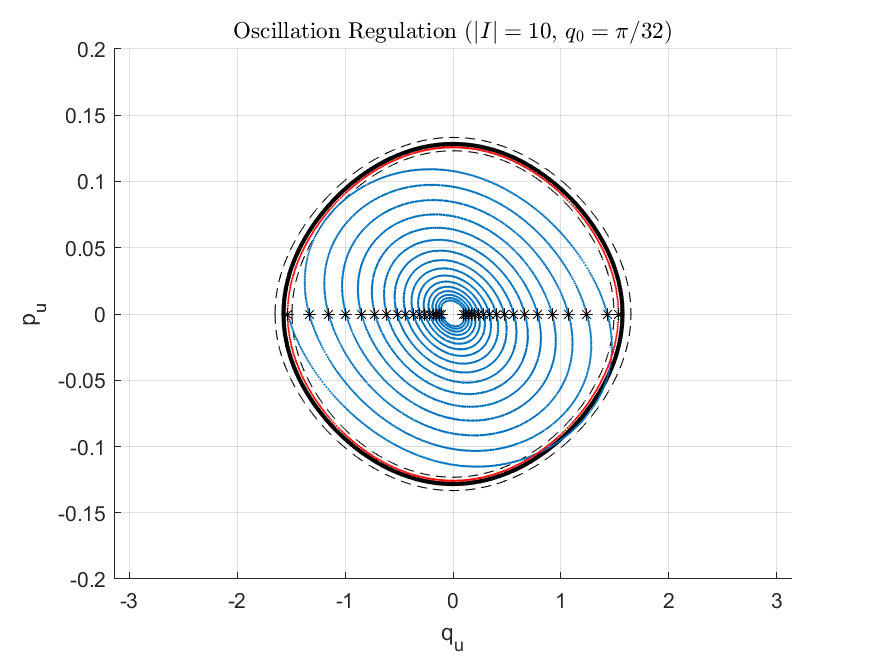}
        \caption{Stabilizing an oscillation from below.}
        \label{fig:acrobot-osc-reg-in}
    \end{subfigure}
    \begin{subfigure}[t]{0.49\linewidth}
        \includegraphics[width=\linewidth]{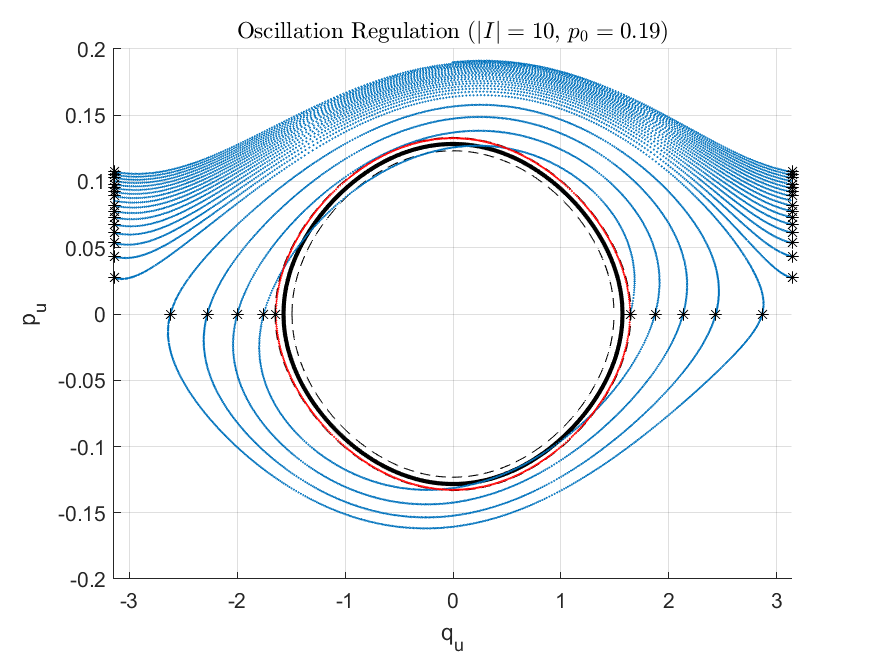}
        \caption{Stabilizing an oscillation from above.}
        \label{fig:acrobot-osc-reg-diss}
    \end{subfigure}
    \caption{Using a supervisor to stabilize the oscillation with peak angle
    \(q_\text{des} = \pi/2\). The desired oscillation is depicted with a solid
    black line.}
    \label{fig:acrobot-osc-reg}
\end{figure}

\subsection{Rotation Regulation}
One can also use a supervisor to stabilize rotations through the mechanism
described in Section \ref{sec:energy-reg}, where the supervisor toggles between
injection and dissipation \vnhcs at each crossing of the \(p_u\)-axis.
Rotation regulation for the acrobot is demonstrated in 
Figure \ref{fig:acrobot-rot-reg}, where the supervisor stabilizes 
\(p_\text{des} = 0.19\) with a \(2\%\) hysteresis \(\delta = 0.02\).
The supervisor evaluated its choice of \vnhc at each black asterisk.
Once it was within range of \(p_\text{des}\) it extended the legs completely,
the orbit of which is shown in red.

In Figure \ref{fig:acrobot-rot-reg-in} the acrobot was initialized at 
the oscillation \((q_u,p_u) = (\pi/2,0)\); here the supervisor injected energy
until the orbit hit the \(p_u\)-axis near \(p_\text{des}\).
In Figure \ref{fig:acrobot-rot-reg-diss} the acrobot was initialized at the
(fast) rotation \((q_u,p_u) = (0, 0.23)\); here the supervisor dissipated energy.
In both cases, the desired rotation was stabilized correctly.

\begin{figure}
    \centering
    \begin{subfigure}[t]{0.49\linewidth}
        \includegraphics[width=\linewidth]{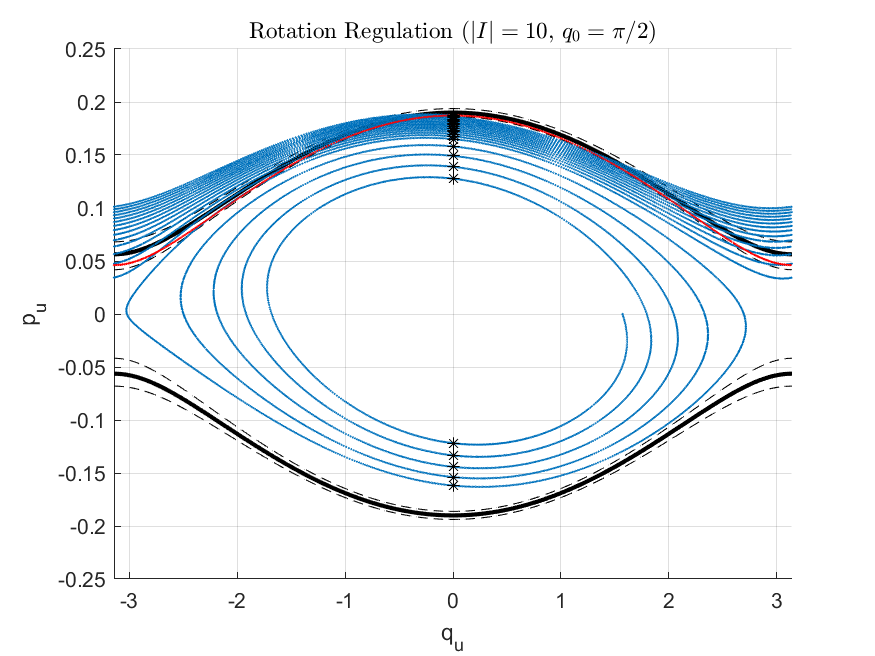}
        \caption{Stabilizing a rotation from below.}
        \label{fig:acrobot-rot-reg-in}
    \end{subfigure}
    \begin{subfigure}[t]{0.49\linewidth}
        \includegraphics[width=\linewidth]{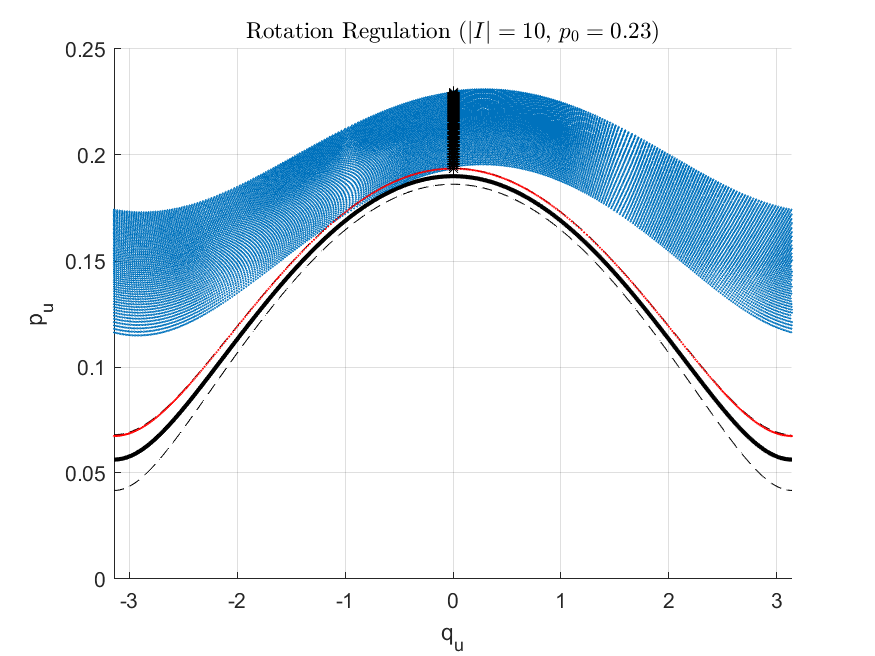}
        \caption{Stabilizing a rotation from above.}
        \label{fig:acrobot-rot-reg-diss}
    \end{subfigure}
    \caption{Using a supervisor to stabilize the rotation with maximal momentum
    \(p_\text{des} = 0.23\). The desired rotation is depicted with a solid black
    line.}
    \label{fig:acrobot-rot-reg}
\end{figure}

Note the difference in shape between the blue rotations of the dissipation \vnhc
and the red rotation of the nominal pendulum in Figure
\ref{fig:acrobot-rot-reg}: the red one slows down much more
near \(|q_u| = \pi\).
This difference arises because of the size of \(I\):
if \(|I|\) were smaller, the blue rotations would be more similar in shape to the
red one because the constrained dynamics for the dissipation \vnhc would be
well approximated by the nominal pendulum.

\subsection{Summary of Results}
The simulation results in this section demonstrate the energy regulation
capabilities of the proposed \vnhc.  
We were able to stabilize both oscillations and rotations by implementing a
control supervisor which toggled between injection, dissipation, and
leg-extension \vnhcs.
In particular, these simulations indicate that the proposed \vnhc
works even for acrobots whose limbs have differing masses and lengths.

\section{Physical Experiments}\label{sec:experiments}

\subsection{Hardware Description}
In this section we will demonstrate that the proposed \vnhc is robust to friction, sensor
noise, and other real-world considerations by testing it on the physical
acrobot depicted in Figure \ref{fig:xingbo-acrobot}.
This platform is called SUGAR, which stands for
Simple Underactuated Gymnastics and Acrobatics Robot.
Its dynamic parameters are outlined in Table \ref{tab:acrobot-parameters}.

\begin{figure}
    \centering
    \includegraphics[width=0.9\linewidth]{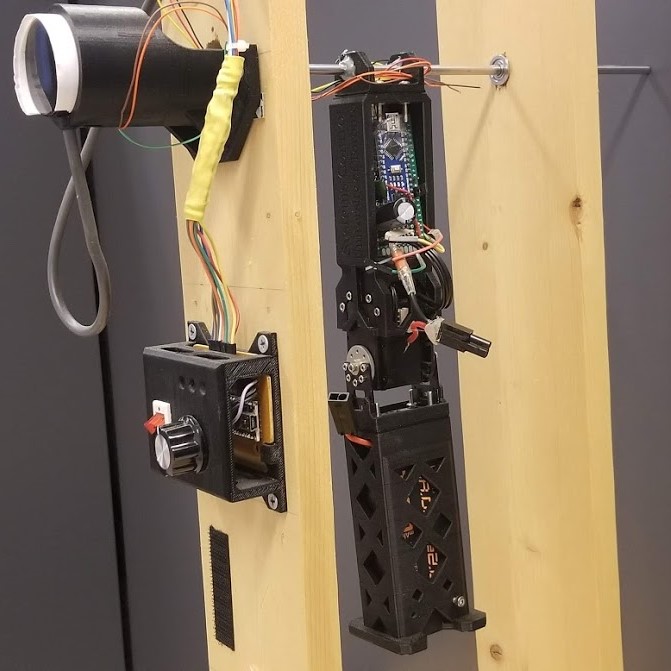}
    \caption{SUGAR is the physical acrobot built by Wang \cite{xingbo_thesis}.}
    \label{fig:xingbo-acrobot}
\end{figure}

SUGAR is comprised of two 3D-printed links: a torso and a leg.
The torso houses an Arduino Nano microcontroller unit (MCU) which controls
a Dynamixel RX24F servo motor between the torso and the leg.
The MCU and the motor are powered by a 12V battery held in a compartment
in the leg.

The torso is rigidly attached to a metal bar, which is held up by two wooden
posts.
On the exterior of one post is a control box with a power switch and a second
Arduino Nano 328.
The purpose of this control box is to read measurements from a rotary
encoder connected to the metal bar, and to transmit these measurements to the
MCU.
The two Arduinos communicate through wires attached to a slip ring on the metal
bar, and the signals are transmitted via I2C.
The control box also provides a USB interface which allows the user to read the
data from the acrobot in real time.

The rotary encoder directly measures \(q_u\), and the Dynamixel servo
motor provides measurements of \(q_a\).
However, there are no sensors measuring the velocity \(\dot{q}\), which means we cannot
directly evaluate \(p_u\) and \(p_a\).
To resolve this issue, the MCU estimates \(\dot{q}\) by applying a washout
filter to sequential measurements of \(q\).
We then compute \(p = M(q)\dot{q}\) for use in the \vnhc controller.

The communication speed between Arduinos restricts the sampling rate of \(p\) to
500Hz.
This low sampling rate results in a noisy momentum signal which suffers
from noticeable phase lag.
This also rate-limits the control signal to \(500\)Hz, which impacts any control
implementation.

Finally, the Dynamixel servo motor does not have a torque control mode; instead,
we can assign the servo setpoint at iteration \(k \in \mathbb{Z}_{> 0}\)
via \(q_a^{k} = \arctan(I p_u^{k-1})\).
This negatively affects the stabilization to the constraint
manifold because we are introducing timing errors from the servo's built-in PID
controller.

\subsection{Experimental Results}

We performed the following tests on SUGAR with the energy injection \vnhc.

\begin{enumerate}
    \item \textbf{Baseline Test:} 
    we initialized SUGAR at 
    \((q_u,p_u) \approx \left(\pi/8,0\right)\). 
    The resulting orbit in Figure \ref{fig:acrobot-unperturbed-orbit} shows
    that SUGAR clearly gaining energy over time.
    Its motion looks similar to that of the energy injection simulation (Figure
    \ref{fig:acrobot-in-orbit}), though its energy gain ceases once it reaches a
    rotation with energy \(E(0,0.195)\).
    The energy gain likely ceases because of friction at the pivot, which was
    not modelled in simulation.

\begin{figure}
    \centering
    \includegraphics[width=0.8\linewidth]{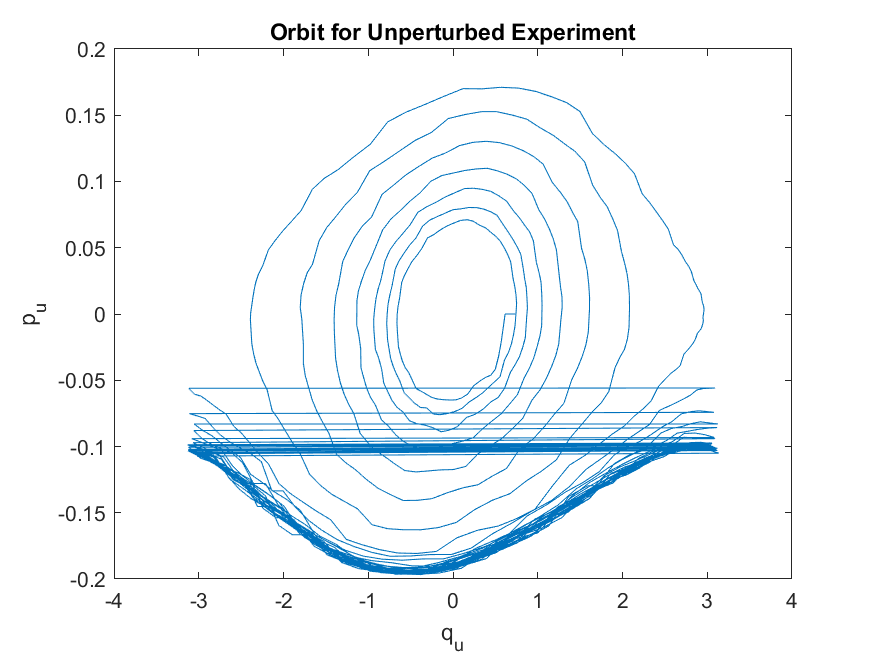}
    \caption{Baseline Test: SUGAR's baseline energy injection orbit. A video of this experiment is available at 
    {\footnotesize
\url{https://play.library.utoronto.ca/watch/6703e828acdcd38058ca884d13660a88}}.}
    \label{fig:acrobot-unperturbed-orbit}
\end{figure}

\item \textbf{Perturbation Test 1:}
    we initialized SUGAR at 
    \((q_u,p_u) \approx \left(\pi,0\right)\), let it run for 15
    seconds, then introduced a rod as SUGAR passed through the bottom of
    its arc.
    This caused a collision which stopped SUGAR in its tracks, at which
    point we immediately removed the rod so SUGAR could continue
    unperturbed.
    The resulting orbit is shown in Figure \ref{fig:acrobot-stopped-orbit}.
    The blue rotation curve corresponds to the orbit before the disturbance,
    while the red spiral confirms that SUGAR begins oscillating after it
    was stopped.  
    After the collision, SUGAR gains energy and eventually starts
    rotating again.

\begin{figure}
    \centering
    \includegraphics[width=0.8\linewidth]{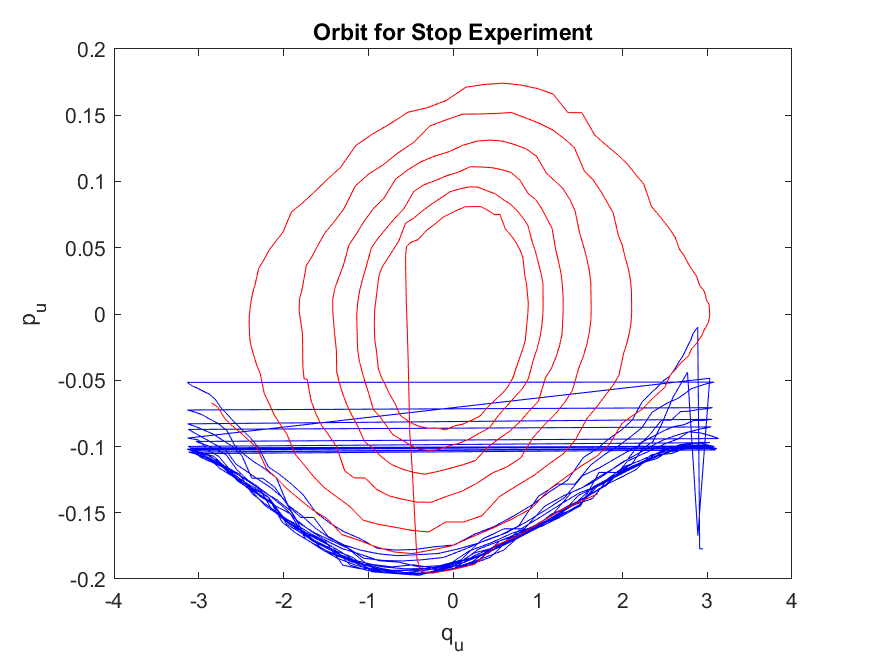}
    \caption{Perturbation Test 1: SUGAR's orbit before (blue) and after (red)
    stopping. A video of this experiment is available at {\footnotesize
\url{https://play.library.utoronto.ca/watch/164a21f0a52d4d56ef60bf1911545e0e}}.}
    \label{fig:acrobot-stopped-orbit}
\end{figure}

\item \textbf{Perturbation Test 2:}
    to see how SUGAR would respond when pushed, we allowed
    it to rotate unperturbed for 15 seconds and then pushed it in its
    direction of motion.
    The orbit in Figure \ref{fig:acrobot-fpush-orbit} shows that SUGAR,
    when pushed, rotates with energy \(E(0,-0.22)\), but then slows down until
    it reaches a rotation with energy \(E(0,-0.195)\).
    We repeated this test by pushing SUGAR against its direction of motion.
    The orbit in Figure \ref{fig:acrobot-rpush-orbit} demonstrates that SUGAR
    readily changes direction, and quickly achieves its maximum speed
    with energy \(E(0,0.195)\).
\end{enumerate}

In simulation, the acrobot was able to gain energy even when initialized
with energy \(E(q_u,p_u) > E(0,0.195)\). 
The baseline and push tests suggest that the \vnhc injects energy into SUGAR
only on \(\{E(q_u,p_u) \leq E(0,0.195)\}\).
This difference between simulation and implementation is likely due to
friction, as well as timing errors incurred by the PID controller in the servo
motor.

\begin{figure}
    \centering
    \begin{subfigure}[ht]{0.49\linewidth}
        \includegraphics[width=\linewidth]{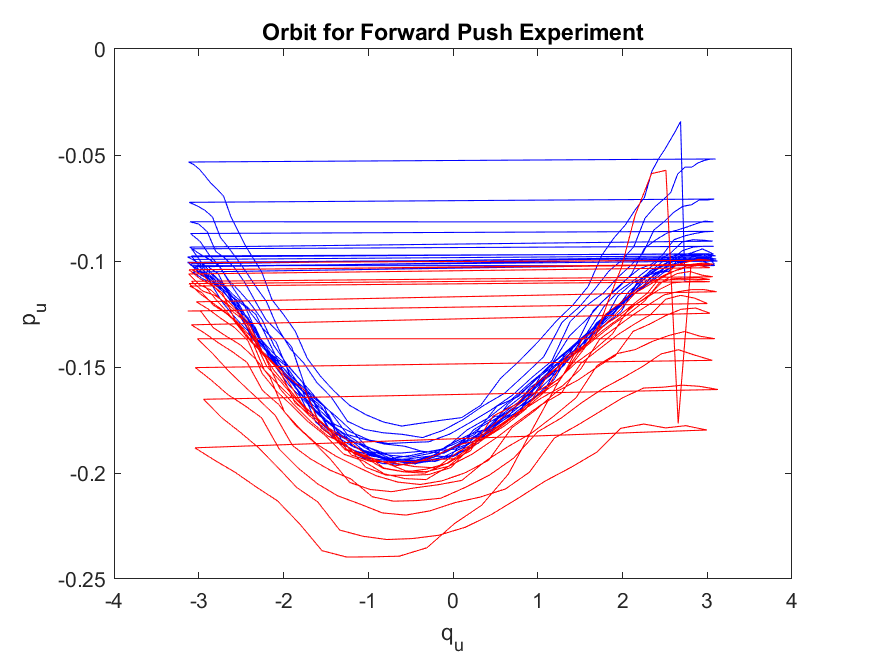}
        \caption{The forward push test.}
        \label{fig:acrobot-fpush-orbit}
    \end{subfigure}
    \begin{subfigure}[ht]{0.49\linewidth}
        \includegraphics[width=\linewidth]{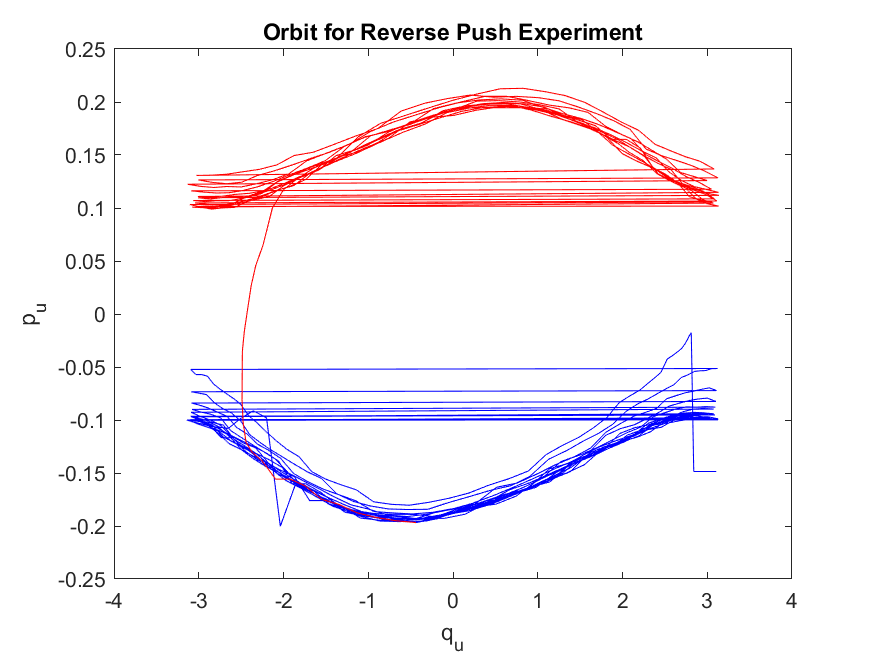}
        \caption{The reverse push test.}
        \label{fig:acrobot-rpush-orbit}
    \end{subfigure}
    \caption{Perturbation Test 2: SUGAR's orbit before (blue) and after (red)
    pushing. A video of this experiment is available at 
    {\footnotesize\url{https://play.library.utoronto.ca/watch/b13d4dc9f6daefa7f4d56dbf1b1bd482}.}
    }
\end{figure}

\subsection{Summary of Results} 
We performed three tests on SUGAR: a baseline energy injection test,
a stop test, and a push test.
These experiments demonstrate that \vnhc-based energy injection is robust to
significant model mismatch, friction, sensor noise, discretized control
implementation, rate-limited measurement and control signals, and dramatic
external disturbances.

\section{Proofs of Theorems~\ref{thm:acrobot-oscillations} and~\ref{thm:acrobot-rotations}}\label{sec:proof}

The proofs of the theorems rely on the next lemma.

\begin{lemma}\label{lem:gain_lose}Let $\cU \subset \Sone \times \Re$ be an open set and $R_2 > R_1 >0$ be such that a connected component of $E_{[R_1,R_2]}$ is contained in $\cU$. Let $T : \cU \to \cV \eqdef \mathopen]\underline{r},\overline{r}[ \times \Sone$, $(q_u,p_u) \mapsto (r,\theta)$, be a diffeomorphism such that 
\begin{equation}\label{eq:T_property}
\begin{aligned}
&\big(\forall R \in [R_1,R_2]\big) \big(\exists s \in \mathopen]\underline{r},\overline{r}\mathclose[\big) \\
& T(E_R) = \{(r,\theta) \in \cV: r = s \},
\end{aligned}
\end{equation}
and the map $R \mapsto s$ is monotonically increasing.
Suppose that $T: \cU \to \cV$ transforms system~\eqref{eqn:acrobot-constrained-dynamics} into
\begin{equation}
\label{eq:r_theta_system}
\begin{aligned}
\dot r &= f_r(r,\theta,I) \\
\dot \theta &= f_\theta(r,\theta,I),
\end{aligned}
\end{equation}
where $f_r$, $f_\theta$ and $g \eqdef f_r / f_\theta$ satisfy, for every $(r,\theta) \in \cU$,
\begin{subequations}\label{eq:lemma_assumptions}
\begin{align}
& f_r(r,\theta,0) = 0, \ f_\theta(r,\theta,0) >0, \label{eq:lemma_assumptions:1}\\
& \partial_r g(r,\theta,0) =0, \label{eq:lemma:assumptions:2}\\
& \int_0^{2\pi} \partial_I g(r,\theta,I)\big|_{I=0} d \theta >0. \label{eq:lemma:assumptions:3} 
\end{align}
\end{subequations}
Then there exists $I^\star>0$ such that for each $I \in \mathopen]0,I^\star]$ (respectively, $I \in [-I^\star,0\mathclose[$) and every $x_0 \in E_{[R_1,R_2]} \cap \cU$, the solution $x(t,x_0)$ enjoys the following properties:

\begin{enumerate}[(a)]
\item There exists $\bar t \geq 0$ such that $x(\bar t,x_0) \in E_{R_2}$ (respectively, $E_{R_1}$).

\item If $R_2 < \bar R$, $x(t,x_0)$ is an oscillation gaining (respectively, losing) energy over $[0,\bar t]$. If $R_1 > \bar R$, $x(t,x_0)$ is a rotation gaining (respectively, losing) energy over $[0,\bar t]$.
\end{enumerate}
\end{lemma}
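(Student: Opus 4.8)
The plan is to take the angular variable $\theta$ as the independent variable and read~\eqref{eq:r_theta_system} as a slowly-varying (order $I$) scalar equation for $r$, then extract the energy monotonicity from a per-revolution estimate accurate to order $I^2$. Since $f_\theta(r,\theta,0)>0$, on any compact subset of $\cV$ we have $f_\theta>0$ for all sufficiently small $|I|$, so along solutions that remain there $\theta$ is strictly increasing and we may pass to $dr/d\theta = g(r,\theta,I)$. Because $g(\cdot,\cdot,0)\equiv 0$ by~\eqref{eq:lemma_assumptions:1}, we may write $g(r,\theta,I)=I\,\tilde g(r,\theta,I)$ with $\tilde g$ continuous and $\tilde g(r,\theta,0)=\partial_I g(r,\theta,0)$, and by~\eqref{eq:lemma:assumptions:3} the average $c(r)\eqdef\int_0^{2\pi}\tilde g(r,\theta,0)\,d\theta$ is strictly positive, hence bounded below by some $c_{\min}>0$ on the compact $r$-range of interest. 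The computation I would carry out carefully is the \emph{per-revolution estimate}: integrating $dr/d\theta=I\tilde g$ over a $\theta$-interval of length $2\pi+O(I)$ and using the bootstrap bound $r(\theta)=r_0+O(I)$, uniform continuity of $\tilde g$, and $2\pi$-periodicity in $\theta$, one gets that the net change in $r$ over one such loop is $I\,c(r_0)+O(I^2)$. The $O(I^2)$ error — as opposed to the $O(I)$ error a crude first-order averaging bound on $r(\theta)$ would give — is exactly what makes the per-loop drift $I\,c(r_0)$ dominate; assumption~\eqref{eq:lemma:assumptions:2} enters when one derives this change via the first variation of the $\theta$-flow, where $\partial_r g(r,\theta,0)=0$ kills the variational correction term.

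For part (a) I would iterate this estimate. Let $s_1<s_2$ be the $r$-levels associated to $R_1<R_2$ by~\eqref{eq:T_property}. Starting from $x_0$ with $r(x_0)\in[s_1,s_2]$, for $I\in\mathopen]0,I^\star]$ with $I^\star$ small each $2\pi$-loop raises $r$ by at least $\tfrac12 I c_{\min}$, while between loops $r$ wanders only by $O(I)$; hence the solution stays inside $\cV$ and $r$ exceeds $s_2$ after at most $O(1/I)$ loops. Since $r$ varies continuously along the solution and $T$ maps $E_{R_2}$ onto $\{r=s_2\}$, the intermediate value theorem yields a first time $\bar t\geq 0$ with $x(\bar t,x_0)\in E_{R_2}$. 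The case $I\in[-I^\star,0\mathclose[$ is symmetric, with $R_1$ and $R_2$ exchanged.

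For part (b) I would first settle the ``oscillation/rotation'' clause using the explicit constrained dynamics~\eqref{eqn:acrobot-constrained-dynamics}. On $\cP_o$ one has $q_a=\bar q_a\arctan 0=0$, hence $\dot p_u=-mgl(2s_u+s_{ua})=-3mgl\sin q_u\neq 0$ for $q_u\in\mathopen]0,\pi\mathclose[$, so each intersection of the solution with $\{p_u=0\}$ inside $\{q_u>0\}$ is transversal; therefore $x([0,\bar t],x_0)\cap\cP_o$ is finite, i.e. discrete, and $x(t,x_0)$ is an oscillation over $[0,\bar t]$. (For the rotation case one checks analogously that, since $E>\bar R$ forces $p_u\neq 0$ — indeed $p_u^2>60m^2gl^3$ at $q_u=0$ — the momentum keeps a fixed sign and $\dot q_u\neq 0$ at $q_u=0$ for small $I$.) For the ``gaining energy'' clause I would note that for small $I$ a standard perturbation argument shows the solution crosses $\cP_o$ once per $\theta$-revolution at a phase that stays within $O(I)$ of a fixed value (because $T(\cP_o)$ is, on the relevant $r$-range, the graph of a $C^1$ function $\theta=\psi_o(r)$), so consecutive crossings are separated by a $\theta$-increment $2\pi+O(I)$. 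Applying the per-revolution estimate between consecutive crossings then gives $r\big(x^{i+1}_o(x_0)\big)-r\big(x^{i}_o(x_0)\big)\geq\tfrac12 I c_{\min}>0$, so $\{r(x^i_o(x_0))\}$ is strictly increasing on $[0,\bar t]$. Finally, on $\cP_o$ we have $\|x^i_o(x_0)\|=q_u$ and $E=3mgl(1-\cos q_u)$, both strictly increasing in $q_u\in\mathopen]0,\pi\mathclose[$, and $r$ is an increasing function of $E$; monotonicity of $\{r(x^i_o(x_0))\}$ therefore transfers to $\{\|x^i_o(x_0)\|\}$, which is the assertion. The rotation case is identical with $\cP_r=\{q_u=0\}$, $\|x^i_r(x_0)\|=|p_u|$, and $E=p_u^2/(10ml^2)$; and $I\in[-I^\star,0\mathclose[$ produces decreasing sequences throughout.

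The step I expect to be the main obstacle is precisely the passage from ``$r$ drifts upward on average'' to ``the sequence of Poincar\'e-section crossing values is monotone''. First-order averaging only controls $r(\theta)$ to within $O(I)$, the same order as the per-loop drift, and so does not suffice to sign $r(x^{i+1}_o(x_0))-r(x^{i}_o(x_0))$. Pinning the per-loop change down to $I\,c(r)+O(I^2)$ — via the factorization $g=I\tilde g$ together with the phase-locking of the section crossings to within $O(I)$ — is the technical heart of the proof. The remaining ingredients (transversality of $\cP_o$ and $\cP_r$, and the monotone correspondence among $\|x\|$, $E$, and $r$ on each section) are routine consequences of the explicit form of~\eqref{eqn:acrobot-constrained-dynamics}.
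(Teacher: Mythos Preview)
Your core argument is the paper's: pass to $dr/d\theta=g(r,\theta,I)$ on the compact image $K=[r_1,r_2]\times\Sone$, Taylor-expand the $\theta$-flow in $I$ (the paper does this via the variational equation $d\hat r_1/d\theta=\partial_r g(r_0,\theta,0)\,\hat r_1+b(r_0,\theta)$, where assumption~\eqref{eq:lemma:assumptions:2} kills the first term exactly as you say), read off the Poincar\'e map $P(r)=r+I\int_0^{2\pi}b(r,\theta)\,d\theta+O(I^2)$ on a fixed section, bound the remainder uniformly on $K$, and iterate to exit $[r_1,r_2]$ in finitely many steps. The only real difference is your handling of part~(b). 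The paper simply takes the Poincar\'e section $\cP=\{\theta=0\}$ in the new coordinates; monotonicity of $r$ at these crossings, plus the monotone correspondence $R\mapsto s$ from~\eqref{eq:T_property}, is declared to give the ``gaining/losing energy'' conclusion directly. Your transversality and phase-locking arguments for $\cP_o,\cP_r$ (graph $\theta=\psi_o(r)$, crossings at $\theta$-increments $2\pi+O(I)$, etc.) are correct but unnecessary, because in both applications the diffeomorphisms are built so that $\{\theta=0\}$ is \emph{exactly} $T(\cP_o)$ (oscillations: $\theta=0\Rightarrow q_u=r>0,\,p_u=0$) or $T(\cP_r)$ (rotations: $\theta=0\Rightarrow q_u=0$), and on those sections $r$ equals $q_u$ or $|p_u|$ respectively. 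The paper's proof of the abstract lemma tacitly relies on this identification rather than stating it as a hypothesis; your version is more honest about what part~(b) actually requires, at the cost of extra work.
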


\begin{proof}
By property~\eqref{eq:T_property}, there exist $r_1,r_2 \in \mathopen]\underline{r},\overline{r}[$ such that $T(E_{[R_1,R_2]}) = [r_1,r_2]\times \Sone$. Moreover, $T(E_{R_2}) = \{r_2 \} \times \Sone$. Define $K \eqdef [r_1,r_2]\times \Sone$, a compact subset of $\cV$, and 
\begin{equation}\label{eq:b_defn}
b(r,\theta) \eqdef \partial_I g(r,\theta,I)\big|_{I=0}.
\end{equation}
By properties~\eqref{eq:lemma_assumptions:1} and~\eqref{eq:lemma:assumptions:3}
and since $f_r, f_\theta$ are continuous and $K$ is compact, there exist
$I_1,\varepsilon_1,\varepsilon_2>0$ such that for each $(r,\theta) \in K$ and $I
\in [-I_1,I_1]$,
\begin{subequations}
\label{eq:properties:new_coords}
\begin{align}
&f_\theta(r,\theta,I) > \varepsilon_1,\label{eq:properties:new_coords:1}\\
& \int_0^{2\pi} b(r,\theta) d \theta > \varepsilon_2.\label{eq:properties:new_coords:2}
\end{align}
\end{subequations}
Property~\eqref{eq:properties:new_coords:1} implies that for $I \in [-I_1,I_1]$
the second component of any solution $(r(t),\theta(t))$ contained in $K$ is a
strictly monotonic function of $t$, and therefore $r(t)$  can be expressed as
$r(t) = \hat r(\theta(t))$, where $\hat r(\theta)$ is a solution of
    \begin{equation}\label{eqn:rhat-ode}
            \frac{d \hat r}{d \theta} = g(\hat r,\theta,I) = \frac{f_r(\hat r,\theta,I)}{f_\theta(\hat r,\theta,I)}. 
    \end{equation}
Now let \(\hat{r}(\theta,r_0,I)\) be the solution of \eqref{eqn:rhat-ode} with initial condition $r(0)=r_0$ and $I\in [-I_1,I_1]$. The Taylor expansion of $\hat{r}(\theta,r_0,I)$ around \(I = 0\) is
\begin{equation}\label{eq:Taylor_exp}
\begin{aligned}
\hat{r}(\theta,r_0,I) &= \hat{r}(\theta,r_0,0) + I\hat{r}_1(\theta,r_0) 
    + \Delta(r_0,\theta,I) \\
&=r_0 + I\hat{r}_1(\theta,r_0) + \Delta(r_0,\theta,I),
\end{aligned}
\end{equation}
where the function $\Delta(r,\theta,I)$ is continuous and $I \mapsto
\Delta(r,\theta,I)$ is $O(I^2)$.
The second identity follows from the fact (due to~\eqref{eq:lemma_assumptions:1}) that \(g(\hat{r},\theta,0) \equiv 0\). Following~\cite[Chapter 10]{khalil_nonlinear}, the function \(\hat{r}_1(\theta,r_0)\) is the solution to the linear time-varying ODE
    \begin{equation}\label{eqn:r1-hat-prime}
\begin{aligned}         
\frac{d \hat{r}_1}{d \theta}&=
        \partial_{r} g(r_0,\theta,0)\hat{r}_1 +
        b(r_0,\theta) \\
&=b(r_0,\theta),
\end{aligned}
     \end{equation}
with initial condition $\hat r_1(0)=0$. The second identity in~\eqref{eqn:r1-hat-prime} follows from~\eqref{eq:lemma:assumptions:2}. We can solve \eqref{eqn:r1-hat-prime} by quadrature, giving
\begin{equation}
\label{eq:ronehat_solution}
\hat{r}_1(\theta,r_0) = \int_0^\theta b(r_0,\theta)d\theta.
\end{equation}
Now consider the Poincar\'e section $\cP = \{(r,\theta) \in \cV: \theta =0\}$. Using~\eqref{eq:Taylor_exp},
and~\eqref{eq:ronehat_solution}, the Poincar\'e map on $\cP$ is $P: \cP \to
\Re_{>0}$, 
\begin{equation}\label{eq:Poincare_map:expansion}
r \mapsto r + I \int_0^{2\pi} b(r,\theta) d \theta + \Delta(r,2 \pi,I).
\end{equation}
Recall that $(r,I) \mapsto \Delta(r,2 \pi,I)$ is continuous and $O(I^2)$ for each fixed $r$. Since the set $[r_1,r_2] \times [-I_1,I_1]$ is compact, there exists $\kappa>0$ such that $| \Delta(r,2\pi,I)| \leq \kappa I^2$ for all  $(r,I) \in [r_1,r_2] \times [-I_1,I_1]$. 
Using this inequality and property~\eqref{eq:properties:new_coords:2} in~\eqref{eq:Poincare_map:expansion}, we get
\[
\begin{aligned}
\big(\forall I \in \mathopen]0,I_1]\big) & \ P(r) > r + I \varepsilon_2 - \kappa I^2 \\
\big(\forall I \in [-I_1,0\mathclose[\big) & \ P(r) < r - |I| \varepsilon_2 + \kappa I^2,
\end{aligned}
\]
for all $r \in [r_1,r_2]$. Pick $I^\star \in \mathopen]0, \min\{I_1, \varepsilon_2/\kappa\} \mathclose[$ and let $\gamma \eqdef |I|\varepsilon_2 - \kappa I^2>0$. Then, 
\begin{subequations}\label{eq:poincare_inequality}
\begin{align}
(\forall r \in [r_1,r_2]) (\forall I \in \mathopen]0,I^\star]) & \ P(r) > r + \gamma \label{eq:poincare_inequality:1}\\
(\forall r \in [r_1,r_2]) (\forall I \in [-I^\star,0\mathclose[) & \ P(r) < r - \gamma. \label{eq:poincare_inequality:2}
\end{align}
\end{subequations}
Returning to the solution $\hat r(\theta,r_0,I)$ of~\eqref{eqn:rhat-ode}, property~\eqref{eq:poincare_inequality:1} implies that for each $(r_0,I) \in [r_1,r_2] \times \mathopen]0,I^\star]$, there exists an integer $\bar k \geq 0$ such that the sequence $\{r(2 \pi k, r_0,I)\}_{k =1,\ldots,\bar k}$ is monotonically increasing and $r(2 \pi \bar k,r_0,I) \geq r_2$. Vice versa, property~\eqref{eq:poincare_inequality:2} implies that for each $(r_0,I) \in [r_1,r_2] \times [-I^\star,0\mathclose[$, there exists an integer $\bar k \geq 0$ such that the sequence $\{r(2 \pi k, r_0,I)\}_{k =1,\ldots,\bar k}$ is monotonically decreasing and $r(2 \pi \bar k,r_0,I) \leq r_1$.

Now we return to system~\eqref{eq:r_theta_system} with state $(r,\theta)$, and let $(r(t),\theta(t))$ be a solution initialized in $K$, with $0< |I| \leq I^\star$. 
For $I>0$ (respectively, $I<0$), let $\bar t \geq 0$ be the first time such that $( r(\bar t), \theta(\bar t)) \in \{r_2\} \times \Sone$ (respectively, $( r(\bar t), \theta(\bar t)) \in \{r_1\} \times \Sone$). If the solution never intersects $\{r_2\} \times \Sone$, we set $\bar t = \infty$.

By property~\eqref{eq:properties:new_coords:1}, $\dot \theta > \varepsilon_1>0$, implying that the intersection of $( r([0,\bar t]),\theta([0,\bar t]) )$ with $\cP$ is a discrete set, and property~\eqref{eq:poincare_inequality} implies that the intersection points in $( r([0,\bar t]),\theta([0,\bar t]) ) \cap \cP$ form a monotonically increasing (respectively, monotonically decreasing) sequence crossing the boundary $\{r_2\} \times \Sone$ (respectively, $\{r_1\} \times \Sone$) in finite time. This in turn implies that $\bar t< \infty$. 

Returning to system~\eqref{eqn:acrobot-constrained-dynamics} in original coordinates, we have shown that for each $x_0 \in E_{[R_1,R_2]}\cap \cU$ and $0< |I| \leq I^\star$, there exists $\bar t\geq 0$ such that the solution $x(t,x_0)$ enjoys these properties:
\begin{itemize}
\item For $I>0$, $x(\bar t,x_0) \in E_{R_2}$, and for $I<0$, $x(\bar t,x_0) \in E_{R_1}$.
\item For $I>0$, $x(t,x_0)$ is an oscillation or a rotation gaining energy over $[0, \bar t]$, depending on the values of $R_1$ and $R_2$. For $I<0$, $x(t,x_0)$ is an oscillation or a rotation losing energy over $[0, \bar t]$.
\end{itemize}
This concludes the proof of Lemma~\ref{lem:gain_lose}.
\end{proof}

\subsection{Proof of Theorem~\ref{thm:acrobot-oscillations}}

We prove the theorem by producing a coordinate transformation $T : \cU \to \cV$ meeting the assumptions of Lemma~\ref{lem:gain_lose}. The transformation is adapted from~\cite{dynamic_vhcs_stabilize_closed_orbits}.

Define $\cU \eqdef E_{<\bar R} \setminus \{0\}$ and $\cV \eqdef
\mathopen]0,\pi\mathclose[ \, \times \Sone$, and consider the coordinate
transformation\footnote{Despite the presence of a $\text{sgn}(\cdot)$ function,
this coordinate transformation $T$ is actually smooth.} $T: \cU \to \cV$
defined as
\begin{equation}\label{eq:T1}
\begin{aligned}
T: & \, (q_u,p_u) \mapsto (r,\theta)  \\
r &= \arccos\left(\cos(q_u) - \frac{p_u^2}{30m^2gl^3}\right) \\
\theta &=\left.\arctan_2\left(-\sign{p_u}\sqrt{1-\frac{q_u^2}{r^2}},\frac{q_u}{r}\right)
    \right|_{r = \rone(q_u,p_u)},
\end{aligned}
\end{equation}
whose inverse is
\[
\begin{aligned}
T^{-1}: & \, (r,\theta) \mapsto (q_u,p_u)\\
q_u &= r \cos(\theta) \\
p_u &=-\sign{\sin(\theta)} 
        \sqrt{30m^2gl^3\cos(r\cos(\theta) - \cos(r))}.
\end{aligned}
\]
It is easily seen that for each $R \in \mathopen ]0,\bar R\mathclose[$, 
$T(E_R) = \{(r,\theta)\in \cV : r = \arccos(1 - R/(3 m g l))\}$, and the function
$R \mapsto \arccos(1 - R/(3 m g l))$ is monotonically increasing. The
diffeomorphism
$T: \cU \to \cV$ therefore satisfies the first requirement of
Lemma~\ref{lem:gain_lose} for any choice of $0< R_1 < R_2 < \bar R$.

In $(r,\theta)$ coordinates, $T$ maps
system~\eqref{eqn:acrobot-constrained-dynamics} into a system of the
form~\eqref{eq:r_theta_system}, where is straightforward to show that
\(f_r(r,\theta,0) \equiv 0\)
and
\begin{equation} \label{eqn:theta-dot-nom-osc}
    f_\theta(r,\theta,0) = \sqrt{\frac{6g}{5l}} 
        \sqrt{\frac{\cos(r c_\theta) - c_r}
            {r^2 s_\theta^2}}.
\end{equation}
In the above we used the notation \(c_\theta \eqdef \cos(\theta)\), \(s_\theta \eqdef \sin(\theta)\), \(c_r \eqdef \cos(r)\), and \(s_r \eqdef \sin(r)\). We note that~\eqref{eqn:theta-dot-nom-osc} has removable singularities at \(\theta \in \{0,\pi\}\), since taking the limits as \(\theta\to 0,\pi\) gives
\begin{equation}\label{eqn:theta-dot-limit}
    \lim\limits_{\theta \to 0}f_\theta(r,\theta,0)
    = \lim\limits_{\theta \to \pi} f_\theta(r,\theta,0)
    = \sqrt{\frac{6g s_r}{10lr}}
    , 
\end{equation}
and these limits are smooth and well-defined for all \(r \in \, ]0,\pi[\). From
\eqref{eqn:theta-dot-nom-osc}--\eqref{eqn:theta-dot-limit}, one can verify that
\(f_\theta(r,\theta,0) > 0\) for all $(r,\theta) \in \cV$, and thus
assumption~\eqref{eq:lemma_assumptions:1} holds. Letting \(g= f_r/f_\theta\),
one can also verify that \(\partial_r g(r,\theta,0) \equiv
0\), and assumption~\eqref{eq:lemma:assumptions:2} holds. Symbolic computations
reveal that \(\partial_I g(r,\theta,I) \big|_{I=0} = L a(r,\theta)\), where
\begin{align*}
    L &\eqdef \frac{\bar{q}_a \sqrt{30m^2g l^3}}{15}
    , \\
    a(r,\theta) &\eqdef \frac{
        r |s_\theta| \left(
        5 c_{r} \cos(r c_\theta) - 8 \cos(r c_\theta)^2 + 3
    \right)
    }{
    s_{r}\sqrt{\cos(r c_\theta) - c_{r}}
    }
    .
\end{align*}
\begin{figure}[htb]
    \centering
    \includegraphics[width=0.8\linewidth]{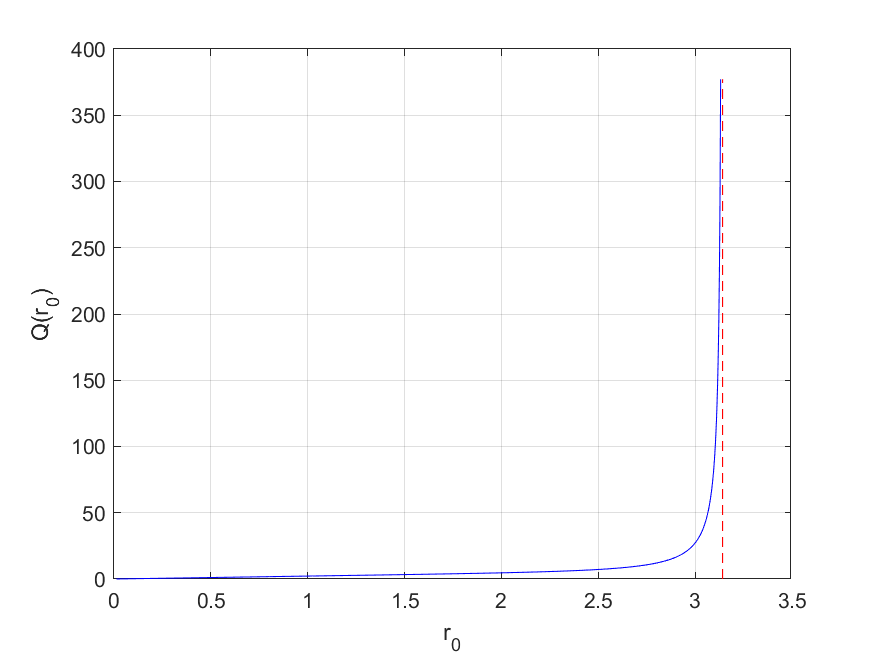}
    \caption{The graph of \(r \mapsto \int_0^{2\pi} a(r,\theta) d\theta\).}
    \label{fig:acrobot-Q}
\end{figure}
Notice that \(L\) is a positive constant which depends only on \(m\), \(g\), \(l\), and \(\bar{q}_a\), while \(a(r,\theta)\) is adimensional. We have
\[
\int_0^{2\pi} \partial_I g(r,\theta,I) \big|_{I=0} d \theta =L \int_0^{2\pi} a(r,\theta) d\theta.
\]
The graph of \(\int_0^{2\pi} a(r,\theta) d\theta\), shown in
Figure \ref{fig:acrobot-Q}, indicates that for each $r \in \mathopen] 0,\pi\mathclose[$, the function $r \mapsto \int_0^{2\pi} a(r,\theta) d\theta$ is positive, and assumption~\eqref{eq:lemma:assumptions:3} holds. Thus the hypotheses of Lemma~\ref{lem:gain_lose} are satisfied, proving the theorem.
\hfill\QED

\subsection{Proof of Theorem~\ref{thm:acrobot-rotations}}

We use again Lemma~\ref{lem:gain_lose}, this time producing two coordinate transformations, adapted from~\cite{dynamic_vhcs_stabilize_closed_orbits}, whose domains are the two connected components of the set $E_{>\bar R}$.

Define $\cU^+ \eqdef \big(E_{>\bar R}\big) \cap \{ p_u >0\}$, and $\cU^- \eqdef \big(E_{>\bar R}\big) \cap \{ p_u <0\}$. Let $\cV \eqdef \mathopen] (10 m l^2 \bar R)^{1/2}, \infty \mathclose[ \times \Sone$, and consider the coordinate transformations $T^+ : \cU^+ \to \cV$ and $T^- : \cU^- \to\cV$ defined as
\begin{equation}\label{eq:T2}
\begin{aligned}
T^\pm: & \, (q_u,p_u) \mapsto (r,\theta) \\
r &= \sqrt{p_u^2 + 30m^2gl^3(1-c_u)} \\
\theta &= \pm q_u,
\end{aligned}
\end{equation}
whose inverses are
\[
\begin{aligned}
(T^\pm)^{-1}: & \, (r,\theta) \mapsto (q_u,p_u)\\
q_u &= \pm \theta \\
p_u &= \pm \sqrt{r^2-30m^2gl^3(1-c_\theta)}.
\end{aligned}
\]
For each $R> \bar R$, it is easily seen that $T^\pm(E_R) = \{(r,\theta) \in \cV : r = (10 m l^2 R)^{1/2}\}$, and the function $R \mapsto (10 m l^2 R)^{1/2}$ is monotonically increasing. Both $T^+$ and $T^-$, therefore, satisfy the first requirement of Lemma~\ref{lem:gain_lose} for any $\bar R < R_1 < R_2$. 

In $(r,\theta)$ coordinates, both $T^+$ and $T^-$ map
system~\eqref{eqn:acrobot-constrained-dynamics} to a system of the
form~\eqref{eq:r_theta_system}, where one can verify that
\(f_r(r,\theta,0) \equiv 0\) and
\[
 f_\theta(r,\theta,0) = \frac{1}{5ml^2} \sqrt{r^2 - 30m^2gl^3(1-c_\theta)}
,
\]
which is positive so assumption~\eqref{eq:lemma_assumptions:1} holds. Letting $g
= f_r / f _\theta$, one can verify that \(\partial_r g (r,\theta,0)
\equiv 0\), and assumption~\eqref{eq:lemma:assumptions:2} holds. The function
\(b(r,\theta) = \partial_I g(r,\theta,0)\) is given in the statement
of Theorem~\ref{thm:acrobot-rotations}, and by assumption we have that
condition~\eqref{eq:lemma:assumptions:3} of Lemma~\ref{lem:gain_lose} holds.
Theorem~\ref{thm:acrobot-rotations} now follows from  Lemma~\ref{lem:gain_lose}
and the fact that $\cU^+ \cup \cU^- = E_{>\bar R}$.
\hfill \QED

\section{Conclusion}\label{sec:conclusion}

In this article we applied the framework of virtual nonholonomic constraints to
the acrobot, and designed a \vnhc emulating giant motion from gymnastics. Our
theoretical analysis applies to a simplified acrobot with point-mass limbs of
equal length and mass, but the simulation and experimental results show the
validity of the approach for a real acrobot.  The main theoretical results of
the paper, Theorems~\ref{thm:acrobot-oscillations}
and~\ref{thm:acrobot-rotations}, assert that the proposed \vnhc makes the
dynamics of the acrobot on the constrained manifold gain or lose energy. There
are two theoretical points that were unexplored in this paper. First, our
analysis only considers initial conditions on the constraint manifold. For
initial conditions off the manifold, there will be a transient whose effects we
have not analyzed theoretically. Second, as pointed out in
Remark~\ref{rem:thm:caveat}, we have shown energy gain/loss of the constrained
dynamics on compact subsets $E_{[R_1,R_2]}$ of $E_{<\bar R}$ and $E_{>\bar R}$,
so both oscillations and rotations of the acrobot will gain/lose energy. We have
not given theoretical guarantees that oscillations gaining energy eventually
turn into rotations gaining energy.
Both our simulations and the physical experiments suggest that our theoretical
predictions continue to hold in the face of transients in enforcing the
constraint, and that oscillations gaining energy will turn into rotations
gaining energy.

\bibliography{bib}
\end{document}